\documentclass[11pt]{article}

\usepackage[final]{acl}

\usepackage{times}
\usepackage{latexsym}

\usepackage[T1]{fontenc}

\usepackage[utf8]{inputenc}

\usepackage{microtype}

\usepackage{inconsolata}

\usepackage{graphicx}

\usepackage{svg}
\usepackage{color}
\usepackage{tabularx}
\usepackage{tabu}
\usepackage{booktabs}
\usepackage{makecell}
\usepackage{float}

\usepackage{hyperref}
\usepackage{tablefootnote}
\usepackage{xspace}

\usepackage{algorithm}      
\usepackage{algpseudocode}  
\usepackage{amsmath}        
\usepackage{sansmath}
\usepackage{amssymb}        
\usepackage{amsthm}
\usepackage{marvosym}       
\usepackage{xcolor}         
\usepackage{tikz}           
\usepackage{colortbl}
\usepackage{caption}        
\usepackage{tcolorbox}
\usepackage{subcaption}
\usepackage{pdfpages}
\usepackage{arydshln}
\usepackage[normalem]{ulem} 
\usepackage{multirow}
\usepackage{cleveref}

\definecolor{rowgray}{gray}{0.95}
\definecolor{QualcommBlue}{HTML}{3253DC}
\definecolor{ZJUBlue}{RGB}{0,63,136}

\newcommand{\ZJUZ}{\textcolor{ZJUBlue}{\textsf{\textbf{\textit{Z}}}}}
\newcommand{\QualcommQ}{\textcolor{QualcommBlue}{\textsf{\textbf{\textit{Q}}}}}
\newcommand\blfootnote[1]{%
  \begingroup
  \renewcommand\thefootnote{}\footnotetext{#1}%
  \endgroup
}

\usepackage{listings}
\usepackage{enumitem}
\usepackage{amssymb} 
\tcbuselibrary{skins, breakable, listings}

\newcommand{\E}{\mathbb{E}}
\newcommand{\R}{\mathbb{R}}

\newcommand{\ours}{\textsc{Android Coach}\xspace}
\newcommand{\sssa}{\textit{SSSA}\xspace}
\newcommand{\ssma}{\textit{SSMA}\xspace}
\newcommand{\SSSA}{\textit{Single State Single Action}\xspace}
\newcommand{\SSMA}{\textit{Single State Multiple Actions}\xspace}
\newcommand{\up}[1]{\textcolor{red}{(+#1)}}
\newcommand{\down}[1]{\textcolor{blue}{(-#1)}}

\title{\ours: Improve Online Agentic Training Efficiency with\\Single State Multiple Actions}

\author{
  \textbf{Guo Gan\textsuperscript{\ZJUZ*}} \quad
  \textbf{Yuxuan Ding\textsuperscript{\QualcommQ}} \quad
  \textbf{Cong Chen\textsuperscript{\ZJUZ}} \quad
  \textbf{Yuwei Ren\textsuperscript{\QualcommQ}} \quad
  \textbf{Yin Huang\textsuperscript{\QualcommQ}} \quad
  \textbf{Hong Zhou\textsuperscript{\ZJUZ\Letter}}
\\[0.8em]
  \textsuperscript{\ZJUZ}Zhejiang University \quad\quad
  \textsuperscript{\QualcommQ}Qualcomm AI Research
}

\begin{document}
\maketitle
\blfootnote{\textsuperscript{*}This work was done during Guo Gan's internship at Qualcomm AI Research, an initiative of Qualcomm Technologies, Inc. \textsuperscript{\Letter}Corresponding author: Hong Zhou \href{mailto:zhouhong_zju@zju.edu.cn}{<zhouhong\_zju@zju.edu.cn>}. Our code will be available at \url{https://github.com/gguogan/Android_Coach}.}
\begin{abstract}
Online reinforcement learning (RL) serves as an effective method for enhancing Android agents. However, guiding agents to learn through online interaction is prohibitively expensive due to the high latency of emulators and the sample inefficiency of existing RL algorithms. 
We identify a fundamental limitation in current approaches: the \SSSA paradigm, which updates the policy with one-to-one state-action pairs from online one-way rollouts, without fully exploring each costly emulator state.
In this paper, we propose \ours, a novel framework that shifts the training paradigm to \SSMA, allowing the agent to sample and utilize multiple actions for a single online state. 
We enable this without additional emulator overhead by online learning a critic that estimates action values. 
To ensure the critic serves as a reliable coach, we integrate a process reward model and introduce a group-wise advantage estimator based on the averaged critic outputs. 
Extensive experiments demonstrate the effectiveness and efficiency of \ours: it achieves 7.5\% and 8.3\% success rate improvements on AndroidLab and AndroidWorld over UI-TARS-1.5-7B, and attains $1.4\times$ higher training efficiency than \SSSA methods PPO and GRPO at matched success rates.
\end{abstract}

 \begin{figure}[!t]
    \centering
    \includegraphics[width=0.48\textwidth]{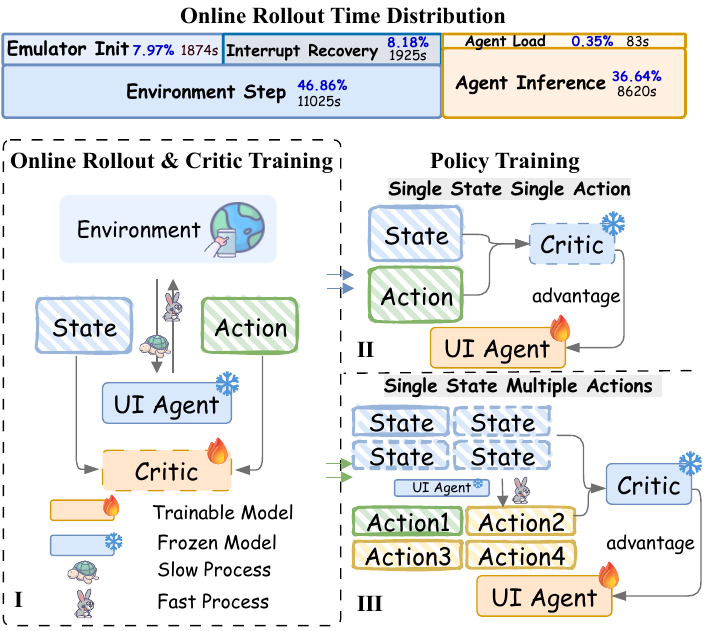}
    \caption{(Top): Online rollout time distribution based on the measured time on 8 parallel environments in training for 80 steps. (Bottom I): The conventional online rollout and critic training loop. The primary bottleneck is the high-latency environmental interaction, while the GUI agent action inference is relatively fast. (Bottom II): Standard agent update with \SSSA paradigm. Agent updates rely merely on the state-action pairs collected from the online rollout. (Bottom III): \ours update with \SSMA paradigm. We fully leverage each expensive online state by generating multiple actions. The agent is then updated using this data. This approach improves training efficiency by gathering more training samples within the same online interaction cost.}
    \label{fig:ssma}
\end{figure}
\section{Introduction}
Graphical User Interface (GUI) agent is an application of Vision-Language models (VLMs) in interactive scenarios~\cite{appagent,autoui, yang2025qwen3}. 
When human users provide an instruction, the agent leverages reasoning and function-calling capabilities to autonomously conduct multi-turn interactions to complete the task~\cite{xu2026odysseyarena, lian2025ui, xie-etal-2025-gui, chen2026surveyinductivereasoninglarge}. Reinforcement learning (RL) is widely used in agent training, which helps to enhance reasoning and decision-making capability for complex sequential tasks~\cite{lu2025arpo}. In this paper, we focus on optimizing the reinforcement learning for GUI agent.

Reinforcement learning approaches for GUI agent generally fall into two categories based on their interaction paradigm.
Offline approaches rely on pre-collected expert trajectories~\cite{sun2025genesis, wei2026anchor}. While they avoid frequent environment interactions, the methods are bounded by the data quality and struggle to handle rapid application and GUI updates due to the lack of online exploration~\cite{bai2024digirl, lu2025uir1}.
Online methods mitigate these limitations by collecting data through environment interactions and learning from active trial-and-error for better performance~\cite{bai2025digi}, but still exhibit other shortcomings in training efficiency.

Online training typically suffers from poor sample efficiency~\cite{lu2025uis1, dong2025agenticreinforcedpolicyoptimization} as shown in Figure~\ref{fig:ssma}. First, it requires high-latency emulator interactions, including initialization, recovery and reaction, which is $1.7\times$ greater than the time for model loading and inference. Consequently, the online RL states, which include screenshots and interaction history, are costly to collect.
Second, current online RL methods conduct one-to-one state-action rollouts~\cite{ zhang-etal-2025-agentcpm}. This means the agent can only sample once with a given state, because the emulator would transition to the next state after the execution. We term this the \SSSA (\sssa) paradigm, like PPO in UI-TARS~\cite{wang2025ui} and GRPO in ARPO~\cite{lu2025arpo}. This paradigm makes it difficult to sufficiently explore the state, since the agent cannot try other actions.

We propose \ours, a novel actor-critic framework that adopts \SSMA (\ssma) paradigm to address the problems above: 1) To reduce interaction overhead, we use a critic (\emph{i.e.}, state-action value function Q) to estimate action value, which allows us to get values of more sampled actions without the environment. 2) To sufficiently explore the states, we randomly sample multiple actions given online states and value them with Q, which means the agent can do more exploration without additional emulator overhead.

Reliably evaluating the action value is essential in our paradigm. Our Q is kept updated using the actor online rollout data, which ensures the robustness against the distribution shift typically encountered in offline approaches~\cite{zheng2025vem}. Meanwhile, we introduce a fine-grained, pretrained process reward model into our framework rather than merely trajectory-level outcome supervision. This makes Q capable of crediting correct steps within a failed trajectory, leading to a better supervision of intermediate steps~\cite{chen2025gui}. Besides, tailored to \ssma paradigm, we propose a novel advantage estimation method Actor-Critic Leave-one-out(ACLOO), where the baseline is the average Q-value with leave-one-out strategy. Our design reduces estimation variance without effort to train a state-value model, and introduces the relative quality, guiding the agent based on the average level~\cite{konda1999actor,bai2024digirl}. This is inspired by RLOO~\cite{kool2019buy}, where the advantage is reward-based while ours uses long-term value.

 \begin{figure*}[ht]
    \centering
    \includegraphics[width=\textwidth]{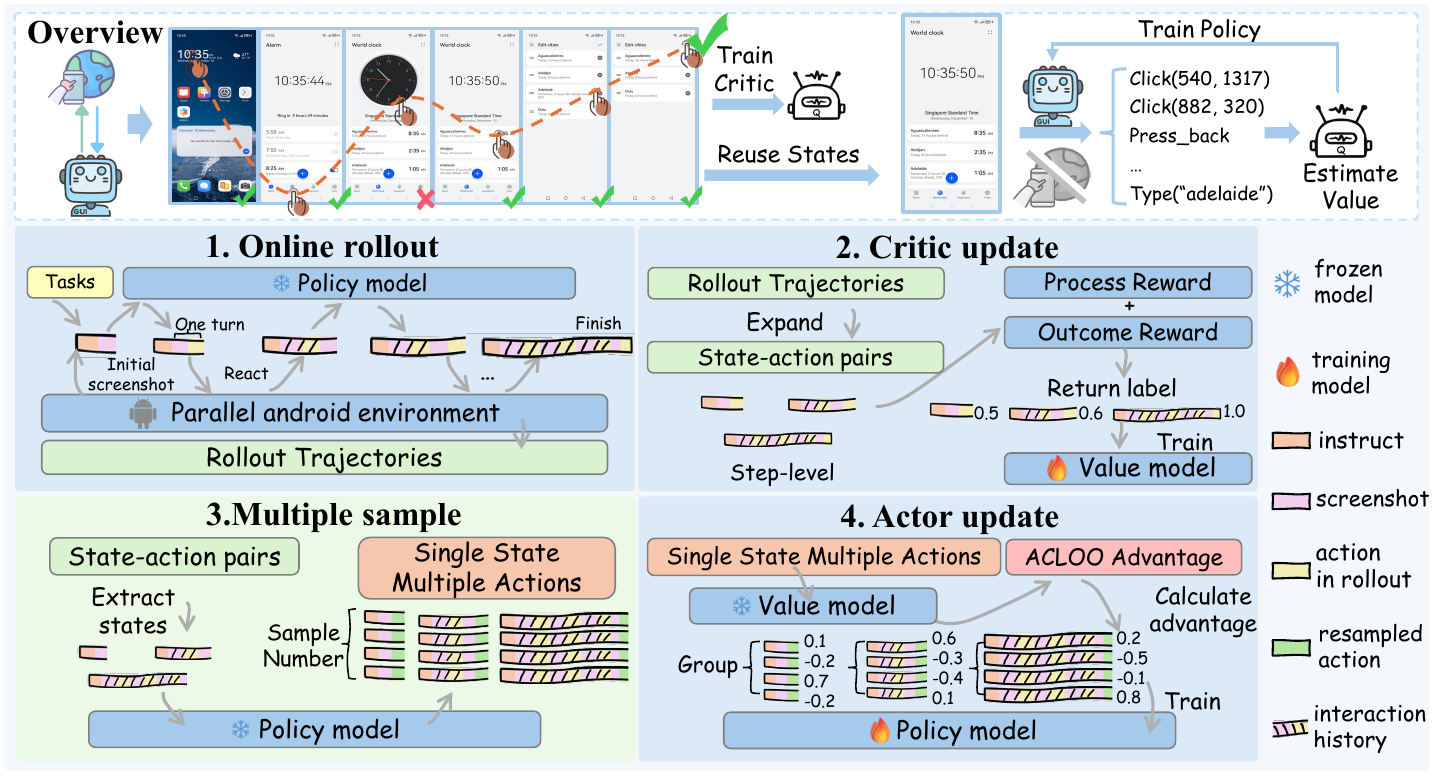}
    \vspace{-2em}
    \caption{Overview and pipeline for a training step in \ours. 1. Online Rollout: Policy interacts with parallel environments to collect complete trajectories. 2. Critic Update: Annotate state-action pairs with returns to train the value model. 3. Multiple sample: Resample multiple actions for each online state. 4. Actor Update: Compute action values and advantages with leave-one-out strategy, then update policy via gradient step.}
    \label{fig:pipeline}
\end{figure*}
The overall framework of \ours is shown in Figure~\ref{fig:pipeline}. This is an online actor-critic method with \SSMA to increase the number of training samples, making full use of the online rollout state within the same interactions. The actor samples multiple actions and constructs state-action pairs as training samples. To do \ssma training with fewer interactions, \ours does not execute these actions with the emulator, but evaluates them with the critic, where a leave-one-out advantage helps train the actor. For critic training, \ours applies the return of online rollout actions as ground truth which is estimated by integrating the process reward and outcome reward.
We validate our approach on the AndroidLab~\cite{xu2024androidlab} and AndroidWorld~\cite{rawles2024androidworld} benchmarks, achieving a 7.5\% and 8.3\% improvement over the success rate of original UI-TARS~\cite{qin2025ui}, while outperforming conventional \SSSA methods in GUI agent RL including PPO~\cite{schulman2017proximal} and GRPO~\cite{shao2024deepseekmath} with 1.4x training efficiency.

In summary, our contributions are as follows:
\begin{enumerate}
\item We propose \ours, a meticulously designed framework that first enables \SSMA paradigm for efficient online agentic reinforcement learning to the best of our knowledge.
\item We propose an online-trained critic guided by both outcome and process rewards, together with our leave-one-out advantage estimator. Without additional environment overhead, the critic supports \SSMA paradigm with reliable action advantage.
\item Extensive experiments on dynamic benchmarks demonstrate the training efficiency of \ours and the effectiveness of its components in online reinforcement learning.
\end{enumerate}

\section{Related Work}
\subsection{RL for Training GUI Agents}
Existing RL methods for GUI agents generally fall into offline and online categories as shown in Table~\ref{tab:related}. Offline approaches~\cite{sun2025genesis, lu2025uis1, luo2026navimasterlearningunifiedpolicy} rely on extensive expert or pre-collected datasets. Consequently, such methods are limited by data quality and fail to handle environment updates effectively~\cite{intelligence2025pi06vlalearnsexperience}. In contrast, online training enables continuous improvement through exploration and trial-and-error in real environments~\cite{shi2025mobileguirladvancingmobilegui, ye2025mobile}. Hybrid frameworks like DigiRL~\cite{bai2024digirl} integrate both phases, while online methods like MobileRL~\cite{xu2025mobilerl}, GUI-Shepherd~\cite{chen2025gui}, and UI-TARS-2~\cite{wang2025ui} adopt GRPO~\cite{shao2024deepseekmath} or PPO~\cite{schulman2017proximal} for online optimization. However, interacting with Android emulator is time-consuming. Although environment parallelization and replay buffers~\cite{lu2025arpo} partially alleviate this issue, current online methods remain restricted to \SSSA paradigm. These approaches generate only one action per state, which fails to fully exploit the expensive state and necessitates more interaction steps for better performance. In this paper, we introduce \SSMA paradigm for online RL which generates and evaluates multiple actions for each state, significantly enhancing training efficiency under limited interaction budgets.
\begin{table*}[htbp]
\centering
\label{tab:method_compare}
\resizebox{1\textwidth}{!}{%
\begin{tabular}{lllccc}
\toprule
\textbf{Training Mode} & \textbf{Base Algorithm} & \textbf{Method} & \textbf{Advantage Estimation} & \textbf{Exploration Paradigm} & \textbf{Sample Efficiency} \\ \midrule
\multirow[t]{4}{*}{Offline} & SFT & OS-Genesis~\cite{sun2025genesis} & Reward-based & Static Data & N/A \\
 & Archer~\cite{zhou2024archer} & DigiQ~\cite{bai2025digi} & Value-based & Static Data & N/A \\
 & Q-Learning~\cite{watkins1989learning} & VEM~\cite{zheng2025vem} & Value-based & Static Data & N/A \\ 
& DPO~\cite{rafailov2023direct} & UI-TARS-1.5~\cite{qin2025ui} & Pairwise-Preference & Static Data & N/A \\
\noalign{\vskip 0.5ex}\hdashline\noalign{\vskip 0.5ex}
Hybrid & AWR~\cite{peng2019advantage} & DigiRL~\cite{bai2025digi} & Value-based & SSSA & Low \\ 
\noalign{\vskip 0.5ex}\hdashline\noalign{\vskip 0.5ex}
\multirow[t]{2}{*}{Online} & GRPO~\cite{shao2024deepseekmath} & \begin{tabular}[t]{@{}l@{}} MobileRL~\cite{xu2025mobilerl}\\WebAgent-R1~\cite{wei2025webagent} \\ ARPO~\cite{lu2025arpo} \end{tabular} & Reward-based & SSSA & Low \\
\noalign{\vskip 0.5ex}
 & PPO~\cite{schulman2017proximal} & \begin{tabular}[t]{@{}l@{}} UI-TARS-2~\cite{wang2025ui} \\ GUI-Shepherd~\cite{chen2025gui} \end{tabular} & Value-based & SSSA & Low \\ \midrule
\textbf{Online} & ACLOO & \textbf{\ours (Ours)} & \textbf{Value-based} & \textbf{SSMA} & \textbf{High} \\ \bottomrule
\end{tabular}%
}
\caption{Comparison of representative GUI agent training frameworks. \ours is the first to achieve efficient \SSMA (SSMA) exploration in an online setting, significantly improve sample efficiency with limited interaction costs compared to standard \SSSA (SSSA) approaches. ACLOO refers to Actor-Critic Leave-One-Out we proposed in our method.}
\label{tab:related}
\vspace{-10pt}
\end{table*}
\subsection{Advantage Estimation in RL Training}
Advantage estimation is a critical component for policy optimization in modern GUI agents. Trajectory-level reward-based approaches~\cite{xu2025mobilerl, lu2025arpo, luo2025gui, wanyan2025look, zhang2026dontactblindlyrobust} use GRPO, requiring full rollouts to obtain outcome rewards and compute advantages from averaged returns.
Critic-based methods~\cite{chen2025gui, wang2025ui, bai2024digirl} can estimate action values without full rollouts, but in online settings the critic is typically trained only with outcome rewards, as process rewards are hard to obtain during interaction. In contrast, offline methods like VEM~\cite{zheng2025vem} utilize step-level supervision, but inherit the limitations of offline training. In this paper, we design a critic that incorporates a process reward mechanism for GUI tasks. Furthermore, actor-critic methods using Q-functions~\cite{bai2025digi, zhou2024archer} usually introduce an additional state-value model to reduce variance and stabilize training. To avoid this extra component, we propose an average Q-value baseline with leave-one-out strategy for advantage estimation.

\subsection{RL with Multiple Actions Estimation}
Several established RL methods share the core principle of evaluating multiple actions per state for sample efficiency or training stability. Discrete SAC~\cite{christodoulou2019soft} computes the exact expected state value by iterating over all possible actions, yielding zero-variance gradient updates. Similarly, Expected Policy Gradients (EPG)~\cite{ciosek2018expected} eliminates sampling variance entirely via analytic integration for continuous distributions or exhaustive evaluation for discrete ones. However, exact marginalization rather than probabilistic sampling with variance reduction tricks strictly limits these methods to small action spaces (e.g., Atari). Consequently, they are computationally intractable for reasoning VLM-based GUI agents with combinatorially massive action spaces. GRPO~\cite{shao2024deepseekmath} is a prevalent multiple-rollout method. However, dynamic and irreversible GUI environments prohibit parallel follow-up executions from the exact same state, causing step-level GRPO to fail to acquire long-term value supervision in the absence of value model, while sequence-level GRPO is still limited to Single State Single Action. Given these constraints of dynamic GUI tasks, our method adopts Monte Carlo sampling. To fulfill our core objective of improving online training efficiency via Single State Multiple Actions paradigm without extra interaction overhead, we introduce a learned value model to estimate sample advantages with our group-wise ACLOO baseline design.
\section{Android Coach}
In this section, we first present the \textbf{preliminary} knowledge including problem formulation and actor-critic framework (Section~\ref{sec:preliminary}). Then we introduce \textbf{how to train a reliable critic} (state-action value function Q) for accurate action value estimation with process reward model and online updates (Section~\ref{sec:coach_train}). Finally, we present \textbf{how to leverage the critic} to improve sample efficiency with \SSMA paradigm and our proposed Actor-Critic Leave-One-Out advantage estimation method (Section~\ref{sec:rl_train}).

\subsection{Preliminary}
\label{sec:preliminary}
\paragraph{Problem Formulation.}
We formulate the Android agent task as a finite-horizon Markov Decision Process (MDP) defined by the tuple $(\mathcal{S}, \mathcal{A}, \mathcal{R})$.
Given an instruction $I$, the process begins at state $s_0$ with the initial GUI screenshot.
At each timestep $t$, the state $s_t \in \mathcal{S}$ consists of the instruction $I$ and the interaction history. 
The policy $\pi_\theta(\cdot|s_t)$ parameterized by $\theta$ samples an $a_t$ from action space $\mathcal{A}$, which is constructed by reasoning-based plan and an operation from the GUI operation action space in Appendix~\ref{app:exp_detail}.
After execution, the environment transitions to the next state $s_{t+1}$.
This interaction loop continues until task completion or a maximum step limit is reached.
The agent receives binary rewards $r_t \in \{0, 1\}\sim \mathcal{R}$, including \textit{process reward} for step-wise correctness and \textit{outcome reward} for final success.
The objective is to learn the optimal parameters $\theta$ that maximize the expected cumulative discounted return:
$J(\pi_\theta) = \mathbb{E}_{\pi_\theta} \left[ \sum_{t=0}^{T} \gamma^t r_t \right],$
where $\gamma$ is the discount factor.

\paragraph{Actor-Critic Framework.}
In the online Actor-Critic method with Q function, the policy is learned concurrently with a state-action value function $Q_\phi(s_t, a_t)$, parameterized by $\phi$. The Critic learns to estimate the expected cumulative reward after taking action $a_t$ in state $s_t$ and following the policy $\pi$ thereafter. This learned Critic $Q_\phi$ then provides an evaluative signal in the form of an Advantage $A(s_t, a_t)$ to guide the update of Actor policy $\pi_\theta$.

\subsection{Train a Reliable Coach}
\label{sec:coach_train}
We build a reliable state-action value function Q by 1) training a process reward model and 2) online training of the Q incorporating process reward.
\paragraph{Process Reward Model (PRM).}
We first design a PRM to provide step-wise process rewards for intermediate actions. Motivated by the effectiveness of model reasoning~\cite{wei2022chain, guo2025deepseek, wanyan2025look, qu2026matchtir, chen2025perturbollava}, we train PRM as a reasoning model that generates analysis prior to judgment. As shown in Figure~\ref{fig:prm_pipeline}, we construct the training dataset by aggregating trajectories from offline datasets AndroidControl~\cite{li2024effects} and supplementary preliminary rollout data, retaining only success and non-redundant trajectories.
For each step $(I, s_t, a_t)$, the initial policy conducts single-step generation to produce a reasoning context alongside a predicted action. Each sample is labeled positive if the generated action matches the ground-truth one from the original success trajectory, and negative otherwise.
This process yields 20k data points, each structured as $(I, s_t, a_t, \textit{label})$. Then we inject the reason for the process reward label with GPT-4o~\cite{hurst2024gpt}. More details are provided in Appendix~\ref{sec:app_bench_data}.
\begin{figure}[t!]
    \centering
    \includegraphics[width=0.5\textwidth]{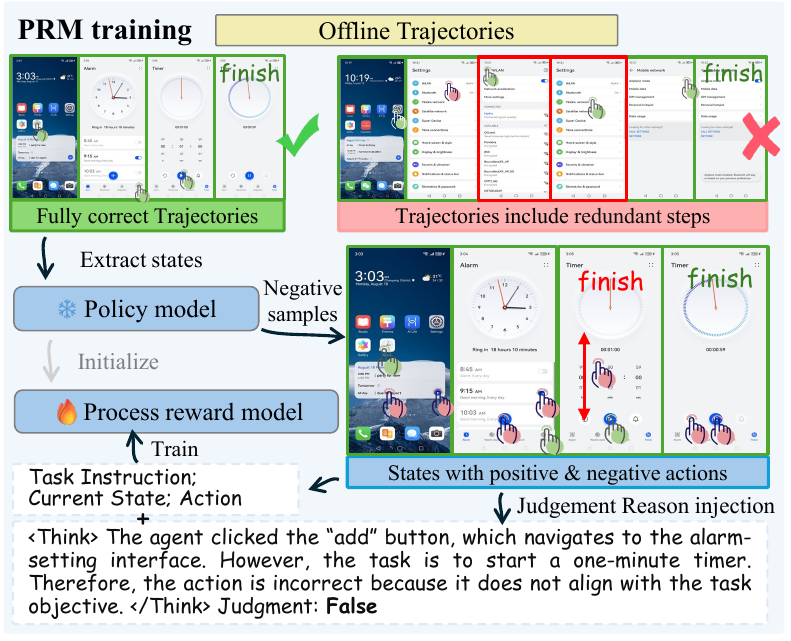}
    \caption{An overview of dataset construction pipeline for process reward model training.}
    \label{fig:prm_pipeline}
\end{figure}

We initialize PRM with our initial policy model and perform full-parameter supervised fine-tuning (SFT) with prompt in Figure~\ref{fig:prompt_prm} in Appendix. The PRM parameterized by $\beta$ is updated to optimize the cross-entropy (CE) loss which denoted as $\mathcal{L}_{\text{PRM}}(\beta)$ in Equation~\ref{eq:sft_loss}:\begin{equation}
    \mathcal{L}_{\text{PRM}}(\beta) = - \log P_{\beta}(y | I, s_t, a_t)
    \label{eq:sft_loss}
\end{equation}
\paragraph{Online Critic Training.}
The critic Q-value function ${Q}_{\phi}$ shares the same architecture as the policy model, augmented with a value head~\cite{vonwerra2022trl}, serving as a coach.
We start the RL loop as shown in first two stages of Figure~\ref{fig:pipeline}, in which the policy model interacts with multiple parallel environments at first to collect a batch of trajectories, $\mathcal{D}$. Upon completion of the batch rollout, the pre-trained PRM assigns step-level process rewards $r_p^t$ based on the intermediate actions, and the outcome verifier (OV) assigns outcome reward ($r_{\text{o}}$) based on the final result.
We estimate the target return $R_t$ for each state-action pair using a weighted Monte Carlo estimation that combines these rewards with weight parameters $\omega_p$, $\omega_o$ and discount factor $\gamma$: $R_t = \omega_p \sum_{\tau=t}^T \gamma^{T-\tau} r_{p}^{\tau:T} + \omega_o r_{o}$. The critic $Q_\phi$ is subsequently updated by minimizing the clipped mean squared error loss between its predictions $Q_\phi(s_t, a_t)$ and the estimated target returns $R_t$ as shown in Equation~\ref{eq:critic_mse}:\begin{equation}
\resizebox{0.89\linewidth}{!}{$
\mathcal{L}_{\text{Q}}(\phi) = \frac{1}{2} \mathbb{E}_{t} \left[ \max\left( (Q_{\phi} - R_t)^2, (\text{clip}(Q_{\phi}, Q_{{old}} \pm \epsilon_{\text{v}}) - R_t)^2 \right) \right]
$}
\label{eq:critic_mse}
\end{equation}
However, directly training critic together with actor presents a significant challenge in providing reliable value, which is also noted by DigiQ~\cite{bai2025digi}. We contend that this issue arises because the value model is poorly prepared for GUI tasks value estimation at the beginning of training, resulting in misleading guidance for the policy updates~\cite{bai2025digi, wang2025ui}. Consequently, before online RL, we initialize the model by pre-training it with the PRM dataset, where labels are mapped to binary scores.
\begin{table*}[htbp]
\centering
\resizebox{\textwidth}{!}{%
    \begin{tabular}{lcccccccc}
    \toprule
    \multirow{2}{*}{\textbf{Models}} & \multirow{2}{*}{\textbf{\#Params}} & \multicolumn{3}{c}{\textbf{AndroidLab SR (\%)}} & \multicolumn{4}{c}{\textbf{AndroidWorld SR (\%)}} \\
    \cmidrule(lr){3-5} \cmidrule(lr){6-9}
     & & \textbf{QD} & \textbf{OP} & \textbf{Average} & \textbf{Easy} & \textbf{Mid} & \textbf{Hard} & \textbf{Average} \\
    \midrule
    
    \rowcolor{rowgray}\multicolumn{9}{l}{\textit{Proprietary Models}} \\
    Gemini-Pro-1.5 (SoM)~\cite{team2024gemini} & - & - & - & 16.7 & - & - & - & 22.8 \\
    GPT-4o (SoM)~\cite{hurst2024gpt} & - & - & - & 31.2 & - & - & - & 34.5 \\
    Claude-Sonnet-4 (SoM)~\cite{claude4} & - & - & - & 40.6 & - & - & - & \textbf{41.0} \\
    UI-Genie-Agent~\cite{xiao2025ui}  & 72B & - & - & \textbf{41.2} & - & - & - & - \\
    \midrule
    \rowcolor{rowgray}\multicolumn{9}{l}{\textit{Open-source 32B/72B Models}} \\
    Qwen2.5VL-32B-Instruct~\cite{bai2025qwen2} & 32B & 28.4\textcolor{gray}{\scriptsize\,±2.1} & 25.8\textcolor{gray}{\scriptsize\,±1.1} & 28.5\textcolor{gray}{\scriptsize\,±0.4} & 37.2\textcolor{gray}{\scriptsize\,±1.9} & 14.8\textcolor{gray}{\scriptsize\,±3.2} & 10.5\textcolor{gray}{\scriptsize\,±5.3} & 25.9\textcolor{gray}{\scriptsize\,±0.9} \\
    UI-TARS-72B-DPO~\cite{qin2025ui}  & 72B & 36.4\textcolor{gray}{\scriptsize\,±2.8} & \underline{31.5}\textcolor{gray}{\scriptsize\,±1.6} & \underline{35.5}\textcolor{gray}{\scriptsize\,±2.2} & \textbf{57.4}\textcolor{gray}{\scriptsize\,±1.6} & 27.8\textcolor{gray}{\scriptsize\,±5.6} & 10.5\textcolor{gray}{\scriptsize\,±0.0} & \underline{40.5}\textcolor{gray}{\scriptsize\,±0.9} \\
    \midrule
    
    \rowcolor{rowgray}\multicolumn{9}{l}{\textit{Open-source 7B/8B Models}} \\
    OS-Genesis-7B-AW~\cite{sun2025genesis} & 7B & 6.8\textcolor{gray}{\scriptsize\,±2.8} & 3.6\textcolor{gray}{\scriptsize\,±1.2} & 5.1\textcolor{gray}{\scriptsize\,±1.9} & 26.8\textcolor{gray}{\scriptsize\,±2.5} & 11.1\textcolor{gray}{\scriptsize\,±0.0} & 1.8\textcolor{gray}{\scriptsize\,±3.0} & 17.8\textcolor{gray}{\scriptsize\,±1.3} \\
    Qwen2.5-VL-7B-Instruct~\cite{bai2025qwen2}  & 7B & 14.8\textcolor{gray}{\scriptsize\,±1.9} & 4.3\textcolor{gray}{\scriptsize\,±0.0} & 8.9\textcolor{gray}{\scriptsize\,±0.7} & 23.5\textcolor{gray}{\scriptsize\,±2.5} & 6.5\textcolor{gray}{\scriptsize\,±4.2} & 3.5\textcolor{gray}{\scriptsize\,±3.0} & 14.9\textcolor{gray}{\scriptsize\,±0.5} \\
    AgentCPM-GUI-8B~\cite{zhang-etal-2025-agentcpm} & 8B & 8.6\textcolor{gray}{\scriptsize\,±1.1} & 16.8\textcolor{gray}{\scriptsize\,±0.6} & 14.7\textcolor{gray}{\scriptsize\,±0.4} & 29.0\textcolor{gray}{\scriptsize\,±0.9} & 5.6\textcolor{gray}{\scriptsize\,±2.8} & 3.5\textcolor{gray}{\scriptsize\,±3.0} & 17.5\textcolor{gray}{\scriptsize\,±0.5} \\
    \midrule
    
    \rowcolor{rowgray}\multicolumn{9}{l}{\textit{UI-TARS-1.5-7B Model}~\cite{qin2025ui}} \\
    Base Model  & 7B & 34.0\textcolor{gray}{\scriptsize\,±2.8} & 27.6\textcolor{gray}{\scriptsize\,±1.2} & 31.9\textcolor{gray}{\scriptsize\,±0.7} & 43.7\textcolor{gray}{\scriptsize\,±4.1} & 25.9\textcolor{gray}{\scriptsize\,±1.6} & 10.5\textcolor{gray}{\scriptsize\,±0.0} & 32.8\textcolor{gray}{\scriptsize\,±2.3} \\
    \quad w/ GRPO~\cite{shao2024deepseekmath} & 7B & 36.4\textcolor{gray}{\scriptsize\,±1.1} & 30.5\textcolor{gray}{\scriptsize\,±2.7} & 34.8\textcolor{gray}{\scriptsize\,±1.4} & 51.9\textcolor{gray}{\scriptsize\,±4.1} & \textbf{28.7}\textcolor{gray}{\scriptsize\,±3.2} & 12.3\textcolor{gray}{\scriptsize\,±3.0} & 38.2\textcolor{gray}{\scriptsize\,±2.2} \\
    \quad w/ PPO~\cite{schulman2017proximal} & 7B & \underline{38.9}\textcolor{gray}{\scriptsize\,±3.7} & 29.4\textcolor{gray}{\scriptsize\,±0.6} & 35.0\textcolor{gray}{\scriptsize\,±1.1} & 50.8\textcolor{gray}{\scriptsize\,±1.6} & 26.9\textcolor{gray}{\scriptsize\,±1.6} & \underline{14.0}\textcolor{gray}{\scriptsize\,±3.0} & 37.4\textcolor{gray}{\scriptsize\,±1.8} \\
    \quad w/ Android Coach & 7B & \textbf{42.6}\textcolor{gray}{\scriptsize\,±1.9} & \textbf{33.7}\textcolor{gray}{\scriptsize\,±0.6} & \textbf{39.4}\textcolor{gray}{\scriptsize\,±0.8} & \underline{56.3}\textcolor{gray}{\scriptsize\,±2.5} & \underline{27.8}\textcolor{gray}{\scriptsize\,±2.8} & \textbf{17.5}\textcolor{gray}{\scriptsize\,±3.0} & \textbf{41.1}\textcolor{gray}{\scriptsize\,±1.8} \\

    \bottomrule
    \end{tabular}%
}
\caption{Success rates of proprietary and open-source models on AndroidWorld and AndroidLab for mobile GUI interaction tasks. QD is the abbreviation for the query detect type, and OP is the abbreviation for the operation type. Standard deviations are reported in gray subscripts for all models except proprietary ones.}
\label{tab:main-results}
\end{table*}

\subsection{RL Guided by Coach}
\label{sec:rl_train}
Here we introduce our key designs in actor training: 1) sampling multiple actions and 2) update actor with the Actor-Critic Leave-One-Out advantage.
\paragraph{Multiple Actions from the Online State.}
As shown in third stage of Figure~\ref{fig:pipeline}, by employing the Q function, we can naturally enhance sample efficiency through the \SSMA paradigm without additional interactions.
Specifically, we reuse every costly state $s_t$ from the online trajectories $\mathcal{D}$, which is collected by the policy $\pi_{\theta}$ during the online rollout in one training step. We sample a set of $k$ candidate actions $\{a_t^1, ..., a_t^k\}$ for every $s_t$ using the current policy.

\paragraph{Actor-Critic Leave-One-Out.}
As shown in the last stage of Figure~\ref{fig:pipeline}, to mitigate the high variance associated with policy gradient updates with $Q_\phi$, it is standard practice to subtract a baseline when estimating the advantage $A(s_t, a_t)$.
Conventional Actor-Critic methods typically learn a separate state-value function $V_\psi(s_t)$ to serve as this baseline, \emph{i.e.}, $A(s_t, a_t) = Q_\phi(s_t, a_t) - V_\psi(s_t)$.
However, within our \ssma framework where multiple actions are evaluated for each state, a more direct and potentially more effective baseline is available.
Inspired by Reinforce Leave-One-Out (RLOO)~\cite{ahmadian2024back, kool2019buy}, we propose the Actor-Critic Leave-One-Out (ACLOO) advantage estimation method.
Specifically, given the set of $k$ actions $\{a_t^1, \dots, a_t^k\}$ sampled \emph{i.i.d.} from the current policy $\pi_\theta(\cdot|s_t)$, we define the ACLOO advantage estimate for action $a_t^i$ as:\begin{equation}
\resizebox{0.80\linewidth}{!}{$
\hat{A}_t^i = Q_\phi(s_t, a_t^i) - \frac{1}{k-1} \sum_{j \neq i} Q_\phi(s_t, a_t^j)
$}
\label{eq:acloo}
\end{equation}
This ACLOO advantage estimation offers two key benefits within our framework: (1) It eliminates the need for a separate value network while effectively reducing variance without bias (proven in Appendix~\ref{sec:lemma}); (2) This mechanism inherently captures the \textit{relative quality} among the candidate actions at that state, steering the policy towards learning actions that outperform the average. Then the actions and advantages are used in performing gradient updates on the actor policy $\pi_\theta$ with PPO clip surrogate loss:
\begin{equation}
\resizebox{0.89\linewidth}{!}{$
\mathcal{L}_{policy}(\theta) = \mathbb{E}_t \big[ \min\big( \tfrac{\pi_\theta(a_t|s_t)}{\pi_{\theta_{\text{old}}}(a_t|s_t)} A_t, \text{clip}(\tfrac{\pi_\theta(a_t|s_t)}{\pi_{\theta_{\text{old}}}(a_t|s_t)}, 1{\pm}\epsilon) A_t \big) \big]$}
\label{eq:ppo_clip_aligned}
\end{equation}

\subsection{Putting it Together}
A pseudocode of our algorithm is provided in Appendix~\ref{sec:algorithm}. 
Initially, the policy model performs online rollouts in parallel emulators to collect a batch of trajectories. The critic is updated using returns provided by the Process Reward Model and Outcome Verifier with Equation~\ref{eq:critic_mse}. Subsequently, we update the actor by resampling multiple actions for each online state, computing Q-values and advantages with the updated critic, and applying the policy gradient with Equation~\ref{eq:ppo_clip_aligned}.
\begin{figure*}[t!]
    \centering
    \begin{subfigure}[b]{0.32\textwidth}
        \centering
        \includegraphics[width=\textwidth]{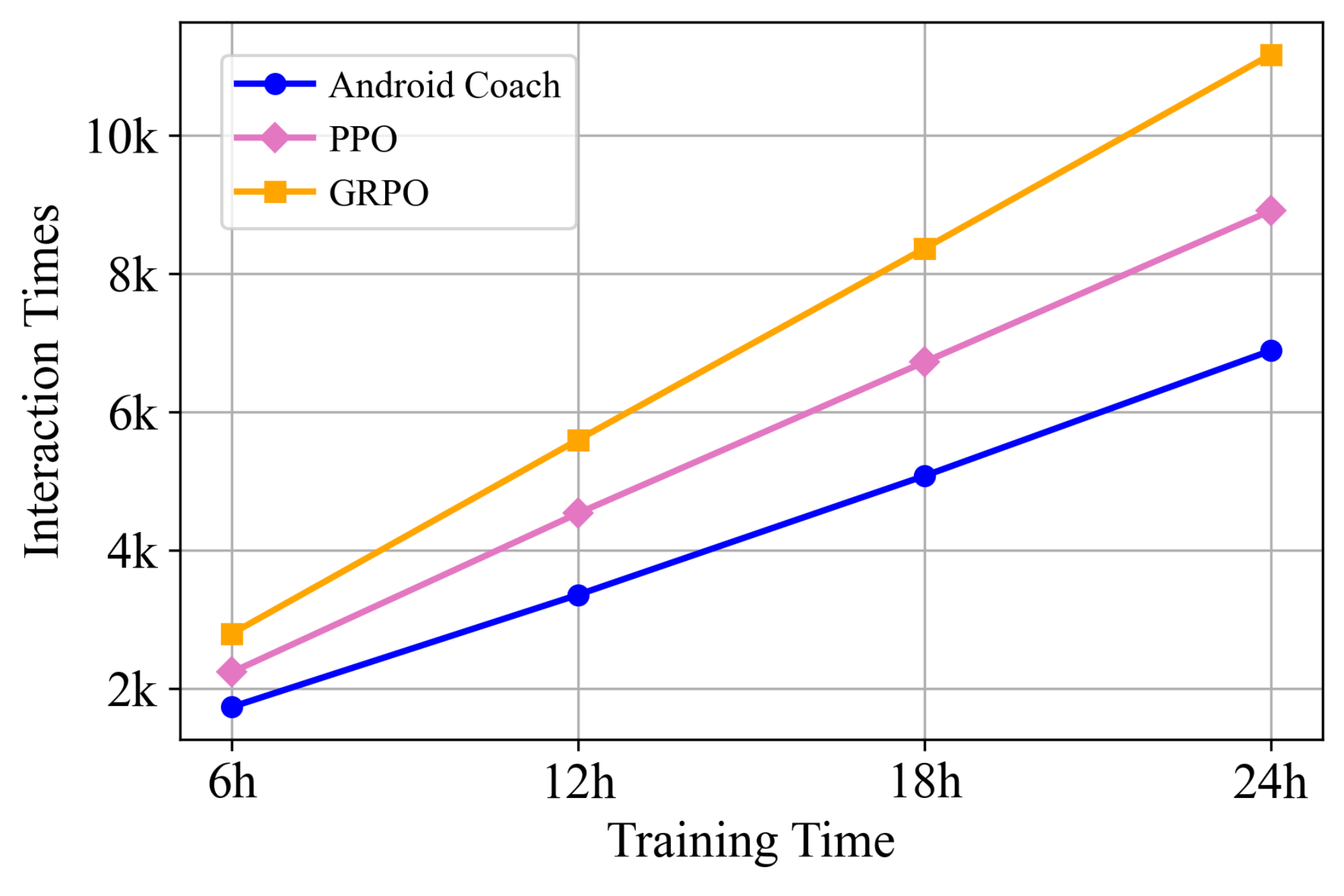}
        \caption{Interaction Times}
        \label{fig:eff_inter}
    \end{subfigure}
    \hfill 
    \begin{subfigure}[b]{0.32\textwidth}
        \centering
        \includegraphics[width=\textwidth]{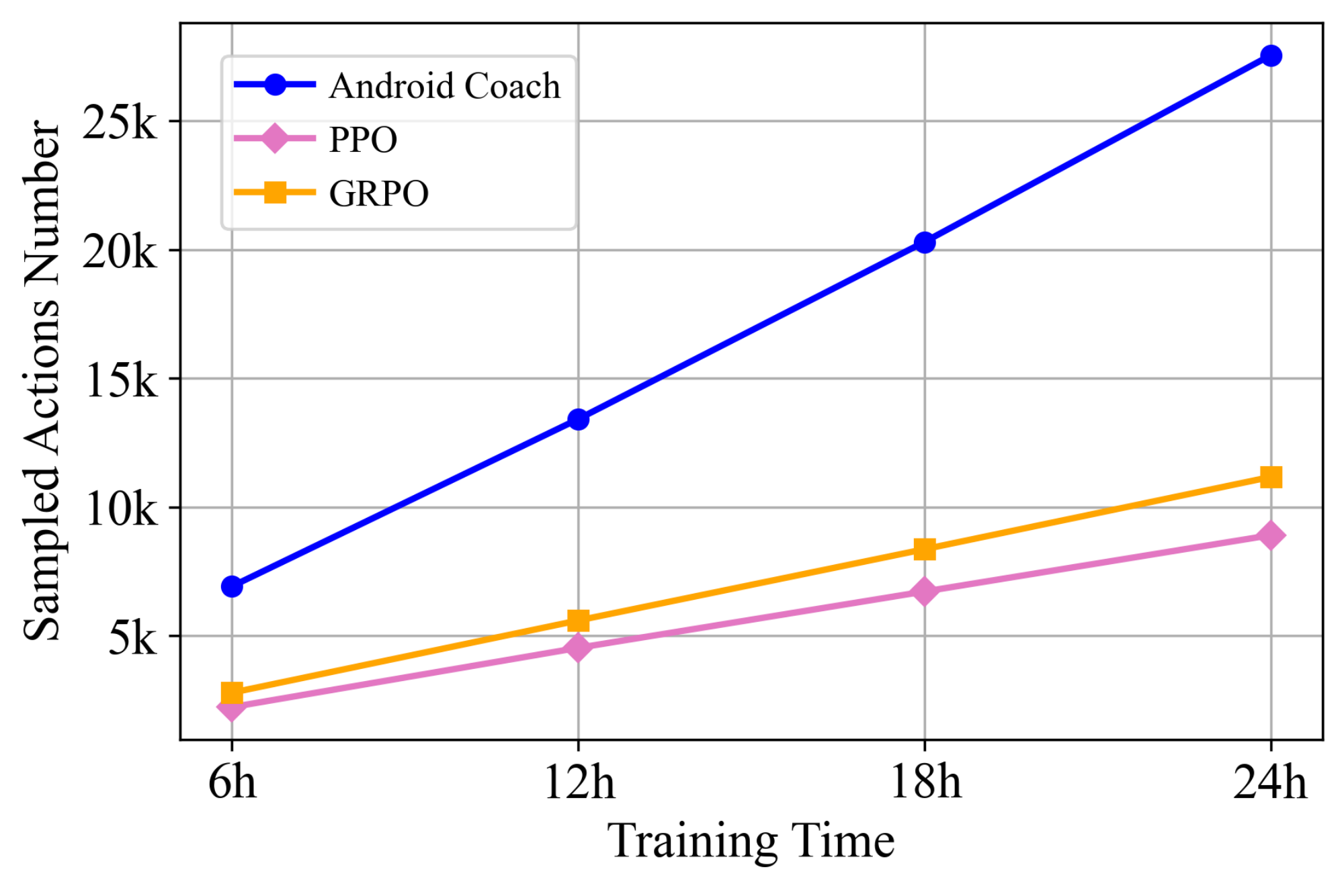}
        \caption{Sampled Actions Number}
        \label{fig:eff_sample}
    \end{subfigure}
    \hfill 
    \begin{subfigure}[b]{0.32\textwidth}
        \centering
        \includegraphics[width=\textwidth]{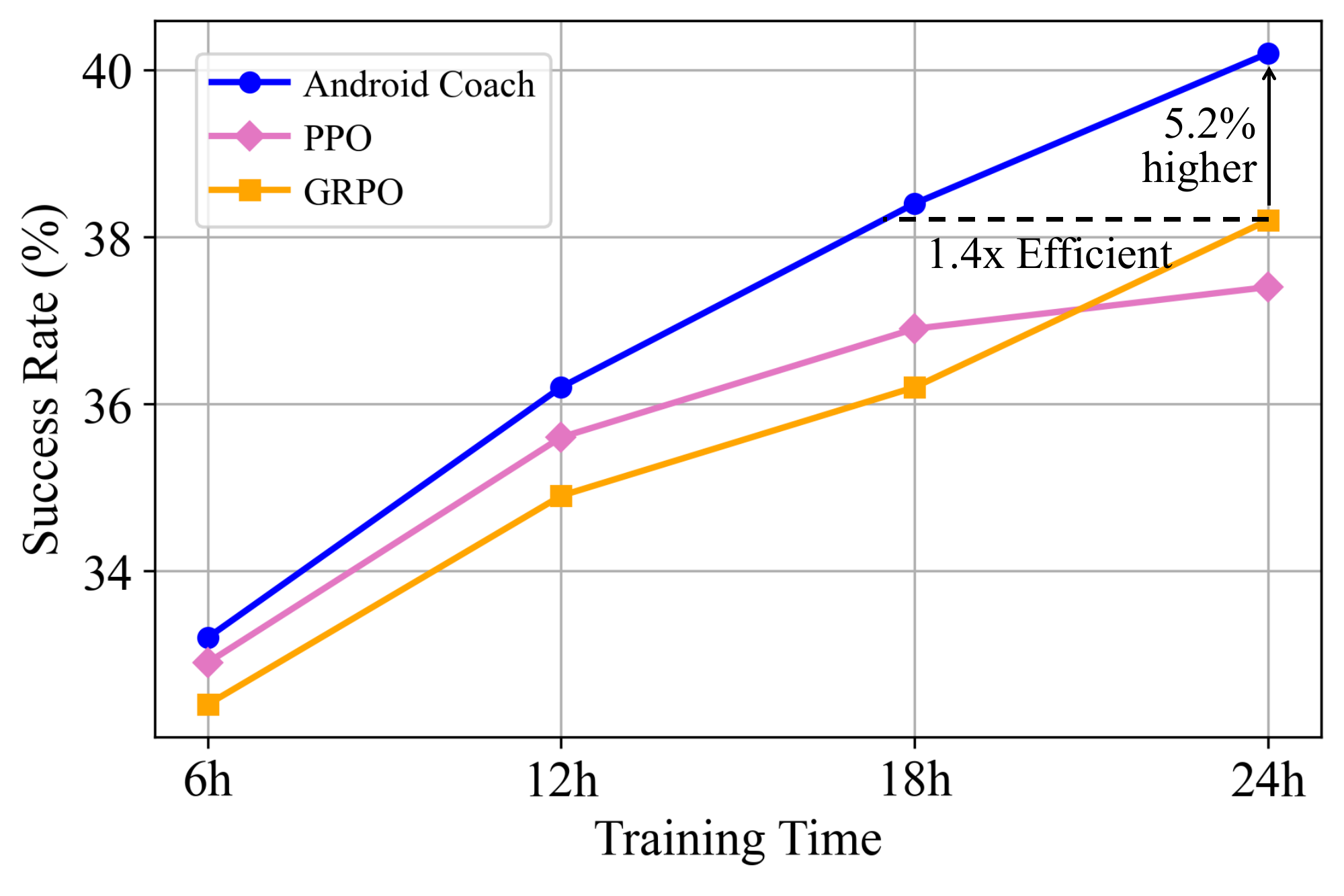}
        \caption{Success Rate}
        \label{fig:eff_sr}
    \end{subfigure}
    
    \caption{Training efficiency analysis of different methods over training time. We report the relationship between training time and (a) interaction times, (b) sampled actions number, and (c) success rate. The GRPO group size and sample number of \ours are both 4. Data is collected on AndroidWorld over UI-TARS-1.5-7B.}
    \label{fig:efficiency}
\end{figure*}

\section{Experiments}

\begin{figure}[!t]
    \centering
    
    \begin{subfigure}{0.46\textwidth} 
        \centering
        \includegraphics[width=\linewidth]{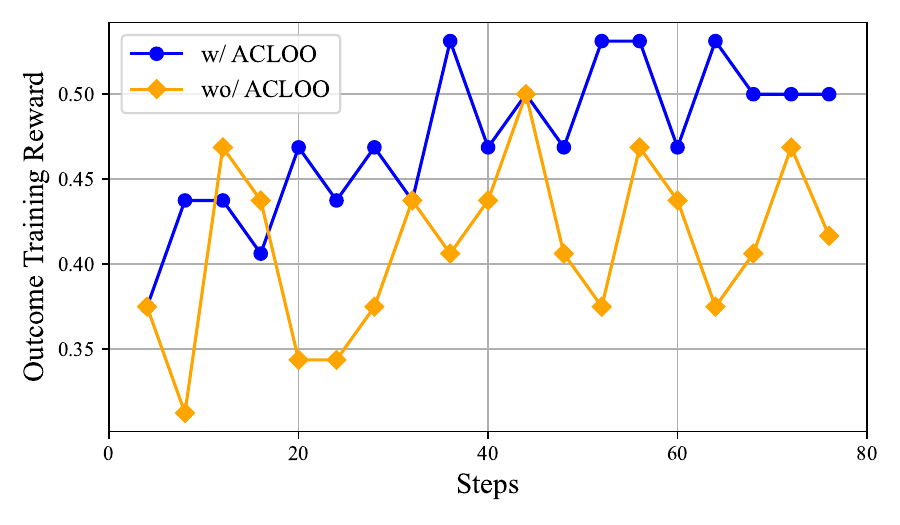}
        \caption{Training outcome reward (4-steps average).} 
        \label{fig:reward_acloo} 
    \end{subfigure}
    
    
    \begin{subfigure}{0.5\textwidth}
        \centering
        \includegraphics[width=\linewidth]{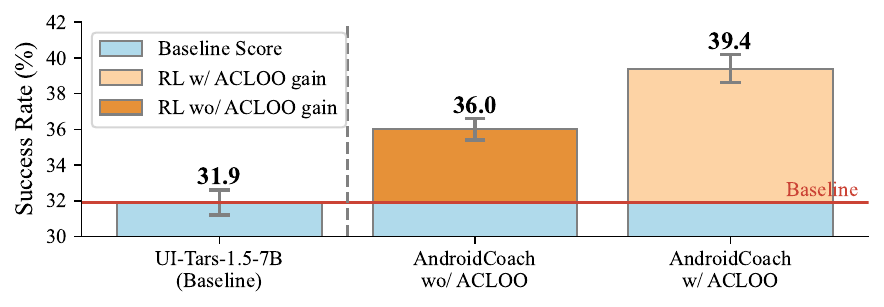}
        \caption{Success rate on AndroidLab.}
        \label{fig:bar_acloo} 
    \end{subfigure}
    
    \caption{The effect of the ACLOO advantage estimation. (a) The training reward curves. (b) The final success rate gain on AndroidLab.} 
    \label{fig:filter_exp}
\end{figure}
\subsection{Experiment Setup}
\paragraph{Environment and Benchmarks.}
We train the agents in parallel Android emulators running Android~13, where the agent interacts via \textit{uiautomator2}. Following prior work~\cite{xu2025mobilerl,xiao2025ui,chen2025gui}, we evaluate on the AndroidLab~\cite{xu2024androidlab} and AndroidWorld~\cite{rawles2024androidworld} benchmarks. AndroidLab contains 138 tasks covering both query detection and operation execution, while AndroidWorld includes 116 tasks with easy/medium/hard difficulties and randomized parameters for diverse scenarios. Task success rate (SR) is computed using the built-in rule-based rubrics. Further details are provided in Appendix~\ref{sec:app_bench_data}.

\paragraph{Dataset and RL Outcome Verifier.}We construct the training dataset by combining randomized tasks from AndroidWorld with the self-collected AndroidLab tasks to form a collection of 2k tasks. Training outcome reward assignment relies on an Outcome Verifier that uses rule matching for tasks with predefined rules in AndroidWorld and an LLM Judge (GPT-4o) to analyze XML and action trajectories for the others. 

\paragraph{Baselines.}We employ the UI-TARS-1.5-7B~\cite{qin2025ui} base model as our starting point. We compare our approach against standard RL baselines for GUI agents including online GRPO and PPO given same RL training time budget, as well as great proprietary and open-source models. To ensure fair comparison, we re-evaluate all open-weight models. We report the mean and standard deviation across three runs.

\subsection{Main Results}
\paragraph{\ours significantly improves the model's performance, making baseline model surpass existing powerful models and methods.}
As presented in the Table~\ref{tab:main-results}, \ours yields substantial performance enhancements for the UI-TARS-1.5-7B. Specifically, it raises the SR on AndroidLab from 31.9\% to 39.4\% and AndroidWorld from 32.8\% to 41.1\%. Remarkably, our method enables the model to outperform powerful proprietary models on AndroidWorld such as Claude-Sonnet-4 with Set-of-Mark prompting which achieves 41.0\%. These results validate the effectiveness of our RL strategy. While methods like GRPO and PPO also demonstrate performance gains, our approach achieves even stronger results given an identical budget of online training time.

\paragraph{\SSMA paradigm can increase RL training efficiency compared to \SSSA methods.}
As illustrated in Figure~\ref{fig:eff_inter}, \ours conducts fewer environment interactions under the same training time due to more actions sampling and model updates per online states. However, as shown in Figure~\ref{fig:eff_sample}, the total number of sampled actions in our approach is substantially higher, which means significantly more samples for policy updates under a fixed training budget. Consequently, our method outperforms PPO by 5.2\% given the same training time, and achieves a comparable SR with $1.4\times$ higher efficiency than \sssa methods including GRPO and PPO, as illustrated in Figure~\ref{fig:eff_sr}.
These results suggest that \ours provides greater online agent RL training efficiency gains.

    
    
    

\begin{table}[!t]
\centering
\small
\resizebox{0.5\textwidth}{!}{
\begin{tabular}{lccc} 
    \toprule
    \textbf{\makecell{Single State\\$N$ Actions}} &
    \textbf{\makecell{Training\\Time}} &
    \textbf{\makecell{AndroidLab\\SR(\%)}} &
    \textbf{\makecell{AndroidWorld\\SR(\%)}} \\
    \midrule
    \emph{number of N} \\
    \quad 1 & 1.00x & 34.8\textcolor{gray}{\scriptsize\,±0.7} & 36.8\textcolor{gray}{\scriptsize\,±1.3} \\
    \quad 2 & 1.22x & 35.5\textcolor{gray}{\scriptsize\,±1.2} & 38.5\textcolor{gray}{\scriptsize\,±2.2}\\
    \quad 4 & 1.62x & \textbf{37.0}\textcolor{gray}{\scriptsize\,±0.7} & 39.1\textcolor{gray}{\scriptsize\,±0.5}\\
    \quad 8 & 2.18x & 37.0\textcolor{gray}{\scriptsize\,±0.0} & \textbf{39.4}\textcolor{gray}{\scriptsize\,±2.0} \\
    \bottomrule
\end{tabular}
}
\caption{Analysis of action rollout times. We report total training time and success rates on AndroidLab and AndroidWorld across different numbers of samples.}
\label{tab:ssna_abl}
\end{table}
\subsection{Ablation Study}
To validate key design choices in our framework, we conduct a set of ablation studies on four key components: the number of action samples, the leave-one-out advantage estimation, the process reward, and the critic initialization strategy.

\paragraph{Increasing the sample count improves performance with a sub-linear training time increase.} We ablate the number of action samples $N$ for \ours without PRM involvement. As shown in Table~\ref{tab:ssna_abl}, SR on AndroidLab increases from 34.8\% at $N=1$ to 37.0\% at $N=4$, representing a 6.3\% improvement, with AndroidWorld exhibiting a similar trend.
However, when $N$ further increases to 8, the performance gain becomes marginal. This can be attributed to the decreasing information gain from additional samples as $N$ grows, leading to a point of diminishing returns.
In terms of training time, the cost scales sub-linearly. Specifically, $N=4$ requires only $1.62\times$ the baseline time, while $N=8$ requires $2.18\times$, far below the $N\times$ cost typical of standard online methods which adopt \sssa paradigm. This highlights the efficacy of decoupling the resampling process from the environment, which enables performance improvements with sub-linear additional cost when increasing the number of samples.

\paragraph{Incorporating leave-one-out strategy into RL training leads to more stable learning and performant agents.}We assess the effectiveness of the proposed ACLOO strategy by benchmarking it against a vanilla actor-critic implementation without the baseline. To ensure a fair comparison we train all models using identical pipelines and hyperparameters. As demonstrated in Figure~\ref{fig:bar_acloo} removing the ACLOO baseline results in significant underperformance yielding a substantially lower Success Rate of 36.0\%. Further analysis of the training curves in Figure~\ref{fig:reward_acloo} reveals that the ACLOO method induces a more stable upward trend in outcome reward throughout training. This underscores the effectiveness of the leave-one-out strategy in stabilizing training and accelerating convergence.
\begin{figure}[t]
    \centering
    \includegraphics[width=0.45\textwidth]{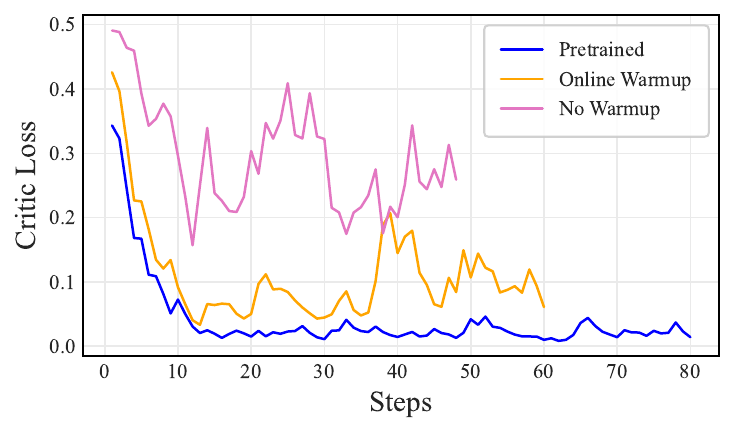}
    \vspace{-0.6em} 
    \caption{Critic loss with different initialization strategies during joint policy optimization.}
    \label{fig:loss}
\end{figure}

\paragraph{Pre-training the Q provides a stable initialization, leading to more consistent convergence of its loss during the joint optimization with the policy.}
To investigate whether critic pretraining is necessary for policy training, we conduct experiments with three initialization strategies comprising 1) training from scratch without warm-up, 2) warming up via online rollouts, and 3) pre-training on the PRM dataset. Detailed experimental configurations are provided in the Appendix~\ref{app:exp_detail}. As illustrated in the loss curves in Figure~\ref{fig:loss}, utilizing a critic without warm-up directly for policy training results in value divergence. Similarly, restricting the warm-up phase merely to the online environment induces instability during subsequent policy updates. Conversely, pre-training the value model with data aligned to the PRM enables the critic to converge stably when subsequently joint trained with the policy. This highlights the importance of having a well-pretrained critic before the RL phase.

\begin{table}[!t]
\centering
\small
\resizebox{\linewidth}{!}{%
\begin{tabular}{llll}
    \toprule
     \textbf{Usage of PRM} & \textbf{SR} & \textbf{ROR} & \textbf{Average Steps} \\
    \midrule    
    \emph{AndroidLab} \\
    \quad  wo/ PRM & 37.0 & 83.2 & 12.7 \\
    \quad  w/ PRM & 39.4 \up{2.4} & 89.8 \up{6.6} & 11.9 \down{0.8} \\
    \emph{AndroidWorld} \\
    \quad  wo/ PRM & 39.1 & 80.0 & 13.0 \\
    \quad  w/ PRM & 41.1 \up{2.0} & 85.1 \up{5.1} & 12.5 \down{0.5} \\
    \bottomrule
\end{tabular}
}
\caption{Analysis of PRM usage. We report the success rate (SR), reasonable operation ratio (ROR) of steps and average steps on AndroidLab for agent trained with and without process model.}
\label{tab:prm_abl}
\end{table}

\paragraph{Using process rewards in training boosts agent performance and leads to more reasonable intermediate steps.} We ablate the effect of the PRM as shown in Table~\ref{tab:prm_abl}. The agent trained with the PRM gets higher SR than without PRM on both AndroidLab (39.4\% \emph{vs.} 37.0\%) and AndroidWorld (41.1\% \emph{vs.} 39.1\%), confirming its quantitative benefit. We also report another metric taken from the AndroidLab benchmark, the reasonable operation ratio (ROR), which evaluates whether an action is reasonable based on the resulting changes on the screen. With PRM, ROR improves by 6.6\% and 5.1\% respectively on the two benchmarks, suggesting that incorporating PRM can enhance the reasonableness of actions taken by the agent, ultimately improving the final success rate.
\section{Conclusion}
This work introduces \ours, a \SSMA paradigm reinforcement learning framework to improve online training efficiency. 
To reduce online interactions for action value estimation, we train a reliable value model to estimate the returns of multiple sampled actions without additional emulator overhead. Specifically, we introduce a fine-grained Process Reward Model to guide the critic online training, and a low-variance group-wise advantage estimator to stabilize policy updates. Extensive experimental results demonstrate that \ours significantly enhances online training efficiency while leading to substantial improvement in performance.
\section*{Limitations}
Despite the effectiveness of \ours, there are several limitations that point to future work. First, we improve sample efficiency mainly from an algorithmic perspective, without optimizing the system architecture for large-scale parallelization; integrating our method into advanced engineering pipelines~\cite{fu2025areal} could further boost wall-clock training efficiency. Second, unlike methods such as MobileRL~\cite{xu2025mobilerl} and MAI-UI~\cite{zhou2025maiuitechnicalreportrealworld}, which apply extensive supervised fine-tuning (SFT) on human annotations before RL, we perform RL directly on a base GUI model. Due to budget constraints, we do not study SFT here, though it could, in principle, raise the performance upper bound. Finally, our reward signal depends, to some extent, on the reliability of the Outcome Verifier: occasional hallucinations in GPT-4o make outcome rewards imperfect. Future research should aim to develop more reliable verification methods for online agentic training that avoid LLM judges’ hallucinations and the labor-intensive nature of manually designed rules, while maintaining high precision.

\section*{Acknowledgments}
This work is supported by the "Leading Goose + X" Science and Technology Program of Zhejiang Province of China (2025C02104).

\bibliography{custom, llm, rl, uiagent, bench}

\appendix

\newpage
\clearpage
\section{Implementation Details}
\label{app:exp_detail}
\subsection{\ours}
\label{app:a1}
\paragraph{Policy Training.}Our policy models are trained via full-parameter fine-tuning on a single node equipped with 8 NVIDIA A100 GPUs with 80 GB memory. We implement the training pipeline using a modified version of the verl framework~\cite{sheng2024hybridflow}. To optimize efficiency, we employ a number of parallel emulator environments equal to the batch size during training. Additionally, we leverage the vLLM engine~\cite{kwon2023efficient} to accelerate inference during the rollout phase, utilize the Fully Sharded Data Parallel (FSDP2) backend for distributed training, and enable BF16 mixed-precision. The critic model is also pretrained alone using FSDP2 with data mentioned in Section~\ref{sec:app_bench_data} and then co-trained with the policy model. If the Process Reward Model is incorporated, it is deployed together with policy model with vLLM engine. During training, the policy model has access to the complete interaction history. All experiments in the main text are conducted using the same settings as the main experiment, unless otherwise specified. Detailed hyperparameters are listed in Table~\ref{tab:hyperparam}. The action spaces for GUI agent are described in Table~\ref{tab:actions}.

\paragraph{Outcome Verifier.}The Outcome Verifier functions as the mechanism for assigning outcome rewards during training~\cite{cui2025process, su2026difficultyawareagenticorchestrationqueryspecific}. For AndroidWorld tasks detailed in  Section~\ref{sec:app_bench_data}, we utilize the official built-in judge based on defined rules. For AndroidLab tasks, we adopt the DigiRL~\cite{bai2024digirl} methodology and employ GPT-4o as the Outcome Verifier. We provide the complete trajectory containing compressed XML information~\cite{xu2024androidlab} and actions for each step to enable accurate task completion judgment with prompt in Figure~\ref{fig:prompt_ov}. To validate the reliability of this method, we sample 100 trajectories evaluated by GPT-4o and enlist two graduate students who are fluent in English and majoring in STEM fields as human annotators to label them using identical criteria. Each task is compensated with \$0.5. The agreement rate between human annotations and the rewards assigned by the Outcome Verifier is 92\%, demonstrating its reliability.

\paragraph{PRM Training.}Our Process Reward Model (PRM) is trained via full-parameter Supervised Fine-Tuning (SFT) using verl on a single node with 8 NVIDIA A100 GPUs with 80 GB memory. We employ the FSDP2 backend with a batch size of 32 for 2 epochs. We optimize the model using AdamW~\cite{adamw2017} with $\beta_1=0.9$, $\beta_2=0.95$, and a weight decay of $0.01$. The learning rate is scheduled with a cosine decay strategy, peaking at 1e-5 after a warmup phase covering 10\% of the total training steps. To enhance stability, we apply gradient clipping with a maximum norm of $1.0$.

\subsection{Reproduction}
\paragraph{PPO and GRPO.}We reproduce the PPO and GRPO algorithms following the identical hardware configuration described in Appendix~\ref{app:a1} which utilizes a single node equipped with eight 80GB NVIDIA A100 GPUs. We employ the native implementations provided by the verl framework~\cite{sheng2024hybridflow} and utilize the same Outcome Verifier for training reward assignment. For PPO, the value model is also pretrained using the same dataset with \ours using Generalized Advantage Estimation~\cite{john2016gae} with $\lambda$ set to 1.0 which functions equivalently to the Monte Carlo return. We use a constant learning rate of 1e-6 for the actor and 1e-5 for the value model respectively without applying a KL coefficient. For GRPO we set the group size to 4, with learning rate of 1e-6 for the actor. To ensure fair comparison we maintain a consistent training time budget across these experiments.  All third-party artifacts used in this work (VERL, AndroidWorld, AndroidLab) are released under the Apache License 2.0. The full license texts are available in their respective official repositories.

\paragraph{Evaluation of other models.}Since not all existing baselines report performance on both AndroidLab and AndroidWorld benchmarks or provide the necessary granular data, we conduct a re-evaluation of selected open-weight GUI agent models. For Qwen2.5VL-32B~\cite{bai2025qwen2}, UI-TARS-72B-DPO~\cite{qin2025ui}, Qwen2.5-VL-7B-Instruct~\cite{bai2025qwen2}, and UI-TARS-1.5-7B~\cite{qin2025ui}, we adopt the identical input format utilized by \ours. Conversely, for OS-Genesis-7B-AW~\cite{sun2025genesis} and AgentCPM-GUI-8B~\cite{zhang-etal-2025-agentcpm}, we adhere to their officially recommended input formats. Furthermore, we ensure the output action space consistent with the official specifications provided by the model publishers. Besides, due to budget constraints, the results of the proprietary models are taken from UI-S1~\cite{lu2025uis1} and MobileRL~\cite{xu2025mobilerl}, which report performance of gemini-1.5-pro, GPT-4o and Claude-Sonnet-4 under the Set of Marks (SoM) strategy.  

\section{Benchmarks and Data}
\label{sec:app_bench_data}
\subsection{Environment}
Following prior work~\cite{xu2025mobilerl,xiao2025ui,chen2025gui}, we construct our environment for training from AndroidLab~\cite{xu2024androidlab} and AndroidWorld~\cite{rawles2024androidworld}. 
All experiments are conducted in a controlled emulator environment with a pre-configured Android 13 system at API Level 33 equipped with the complete Google Mobile Services suite. The agent interacts with emulators by reasoning~\cite{wu2026atlas, ma2026thinkingblueprintsassistingvisionlanguage, MaChenZhangWuDing2025} and generating function-call-like commands, which are subsequently executed via the \textit{uiautomator2} tool. The emulators include applications for commonly used tasks such as bookkeeping, navigation, and calendar management. These tasks cover both execution and querying scenarios. After RL, the agent's performance is evaluated using strict, rule-defined matching criteria~\cite{ding2026octobenchbenchmarkingscaffoldawareinstruction, lin2026mmdocr1trainingagentslong}.

\subsection{Benchmarks}
\paragraph{AndroidLab.}
AndroidLab serves as an online benchmark platform designed to evaluate autonomous GUI agents within the Android environment. It comprises 138 tasks spanning nine mobile applications including Zoom, Pi Music Player, Bluecoins and so on. Different from AndroidWorld, there is no randomness in evaluation tasks and initialization scenarios, which means each task has a fixed initial state and expected outcome. Performance metrics include Success Rate (SR) and Reasonable Operation Ratio (ROR).

\paragraph{AndroidWorld.}
AndroidWorld functions as a comprehensive online benchmark for autonomous GUI agents featuring 116 tasks across 20 distinct applications. Task categories encompass audio recording, content editing, gaming, and scheduling. To ensure scenario diversity, these tasks are dynamically generated using variable input parameters and adaptive initialization states. We utilize Success Rate (SR) as the primary metric to evaluate agent performance. We also additionally use the Reasonable Operation Ratio (ROR) metric.

\subsection{Data}
\label{sec:data}
\paragraph{AndroidControl.}
AndroidControl represents a large-scale dataset consisting of 15,283 demonstrations of everyday tasks involving 833 Android applications. We utilize AndroidControl exclusively to construct our Process Reward training dataset, which is detailed in subsequent paragraphs.

\paragraph{Policy Training Tasks.}The reinforcement learning phase requires only unsupervised task instructions. We construct our task pool by leveraging the AndroidLab and AndroidWorld environments. Specifically, we automatically generate candidate tasks based on accessible applications in AndroidLab and synthesize additional tasks using randomized parameters within AndroidWorld. Following a manual verification process to ensure feasibility and strictly exclude overlaps with the test set, we compile a final dataset of 2,000 training tasks.
\paragraph{Process Reward Model Training Dataset.}We compile our dataset from AndroidControl and pre-collected online trajectories, filtering for successful and non-redundant sequences via GPT-4o and manual verification. For each state, we generate eight candidate reasoning-action pairs using UI-TARS-1.5-7B. Candidates are labeled positive if their actions align with the ground truth while mismatches are labeled negative. Following GUIOdyssey~\cite{lu2025guiodyssey}, we consider coordinates correct if they fall within a distance of 14\% of the screen width from the ground truth. We further synthesize high-quality reasoning components for process judgment using GPT-4o. The final dataset comprises 20k samples with a balanced 1:1 ratio between positive and negative examples, as shown in Figure~\ref{fig:prm_data}. Subsequently, we adapt these data points to pre-train the Q function by mapping the label True/False to 1/0 value score.
\begin{figure}[ht]
    \centering
    \vspace{-2em}
    \includegraphics[width=0.48\textwidth]{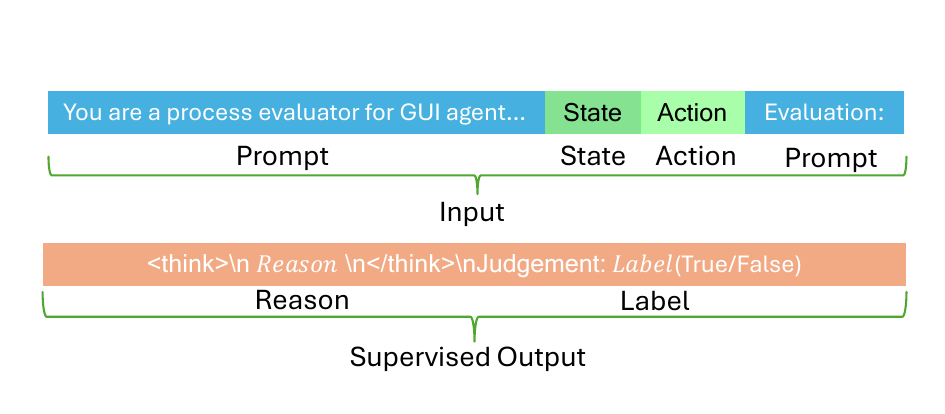}
    \vspace{-2em}
    \caption{Process Reward Model training data format.}
    \label{fig:prm_data}
\end{figure}
\paragraph{License.}All third-party artifacts used in this work are released under the Apache License 2.0, and our use is consistent with their standard open-source terms for academic research.
\begin{table*}[htbp]
\centering
\begin{tabular}{lll}
\toprule
Component & Hyperparameter & Value \\
\midrule
Data & Max Prompt Length & 32768 \\
Data & Max Response Length & 512 \\
Data & Train Batch Size & 8 \\
Actor/Policy & Strategy (Parallelism) & FSDP2 \\
Actor/Policy & PPO Micro Batch Size/GPU & 1 \\
Actor/Policy & Learning Rate (LR) & 1e-6 \\
Actor/Policy & Gradient Clipping & 1.0 \\
Actor/Policy & Clip Ratio & 0.2 \\
Rollout \& Sampling & Sampling Temperature & 1.0 \\
Rollout \& Sampling & Max New Tokens & 512 \\
Rollout \& Sampling & Max Turns & 25 \\
Rollout \& Sampling & Max Pixels & 1270180 \\
Rollout \& Sampling & Min Pixels & 256 \\
Reward & Process Reward Weight \textbf{$\omega_p$} & 0.2 \\
Reward & Outcome Reward Weight \textbf{$\omega_o$} & 1 \\
Reward & Discount factor \textbf{$\lambda$} & 0.95 \\
Critic & Learning Rate (LR) & 1e-5 \\
Critic & Clip Range Value & 0.5 \\
Critic & Warmup Ratio & 0.1 \\
\bottomrule
\end{tabular}
\caption{Main hyperparameters in \ours.}
\label{tab:hyperparam}
\end{table*}
\begin{table*}[htbp]
\centering
\begin{tabular}{ll}
\toprule
\textbf{Action} & \textbf{Definition} \\
\midrule
Click(x, y) & Clicks at coordinates (x, y). \\
Scroll(x1, y1, x2, y2) & Scrolls from (x1, y1) to (x2, y2). \\
Drag(x1, y1, x2, y2) & Drags from (x1, y1) to (x2, y2). \\
Type(content) & Types the specified content. \\
Wait() & Pauses for a brief moment. \\
Finished(content) & Marks the task as complete. \\
LongPress(x, y) & Long presses at (x, y). \\
PressBack() & Presses the "back" button. \\
PressHome() & Presses the "home" button. \\
PressEnter() & Presses the "enter" key. \\
\bottomrule
\end{tabular}
\caption{Operation action space for GUI agent in \ours.}
\label{tab:actions}
\end{table*}

\newpage 
\onecolumn
\begin{figure}[htbp] 
    \centering 
    \begin{tcolorbox}[
        title=Prompt for \ours,
        colback=blue!5!white,
        colframe=blue!75!black,
        fonttitle=\bfseries,
        arc=5pt,
        boxrule=1pt,
        colbacktitle=blue!50!white,
    ]
    \textbf{Task Description}
    You are a GUI agent. You are given a task and your action history, with screenshots. You need to perform the next action to complete the task.

    \textbf{Output Format}

    Thought: ...\\
    Action: ...

    \textbf{Action Space}

    click(start\_box=`<|box\_start|>(x1,y1)<|box\_end|>')\\
    long\_press(start\_box=`<|box\_start|>(x1,y1)<|box\_end|>')\\
    type(content=`xxx')\\
    scroll(start\_box=`<|box\_start|>(x1,y1)<|box\_end|>', end\_box=`<|box\_start|>(x2,y2)<|box\_end|>')\\
    open\_app(app\_name=`')\\
    press\_home()\\
    press\_back()\\
    finished(content=`') \# Submit the task regardless of whether it succeeds or fails.

    \textbf{Note}\\
    - Use English in Thought part.\\
    - First summarize your previous actions, then write a small plan and finally summarize your next action (with its save target element) in one sentence in Thought part.\\
    Mobile and UI Agent Interaction History: \{interaction\_history\}
    \end{tcolorbox}
    \caption{Prompt for \ours. This prompt is consistent with the official prompt provided by UI-TARS-1.5-7B.} 
    \label{fig:prompt_uiagent} 
\end{figure}
\begin{figure}[H] 
    \centering 
    \begin{tcolorbox}[
        title=Prompt for Outcome Verifier,
        colback=blue!5!white,
        colframe=blue!75!black,
        fonttitle=\bfseries,
        arc=5pt,
        boxrule=1pt,
        colbacktitle=blue!50!white,
    ]
    \textbf{Task Overview:}\\
    You are an expert evaluator for determining the success of GUI tasks. You will be provided with the following information:\\
    1. The task description.\\
    2. Mobile and UI Agent Interaction History including the step-by-step page state in compressed XML format and the agent's action for each step.

    \textbf{Scoring Rule:}\\
    You need to judge if the UI Agent completed the task based on the interaction trajectory. You should return True or False according to your judgment.

    \textbf{Output Format:}\\
    <analysis> [Your analysis] </analysis>\\ 
    </ans> [Your judgment] </ans>\\
    \textbf{Current Task Information:}\\
    Task Description: \{instruction\} \\
    Mobile and UI Agent Interaction History: \{interaction\_history\}
    \end{tcolorbox}
    \caption{Prompt for outcome verifier.} 
    \label{fig:prompt_ov} 
\end{figure}
\vspace{-40em}
\begin{figure}[htbp] 
    \centering 
    \begin{tcolorbox}[
        title=Prompt for Process Reward Model,
        colback=blue!5!white,
        colframe=blue!75!black,
        fonttitle=\bfseries,
        arc=5pt,
        boxrule=1pt,
        colbacktitle=blue!50!white
    ]
    \textbf{Role Definition:} 
    You are a meticulous evaluator for an Android GUI automation agent. Your primary mission is to analyze the agent's reasoning and proposed action in the context of a given task and the current user interface. You must determine if the agent's action is a correct and logical step towards completing the task, and judge whether the operation conforms to Android system specifications.
    
    \textbf{Input Data:} 
    You will be provided with:
    1. Instruction: The high-level goal.
    2. Screenshot: A visual representation of the current GUI state.
    3. Agent's Thought and Action: The reasoning process and the specific \texttt{Action} intended.
    
    \textbf{Evaluation Criteria:}
    Your output format should be \texttt{<think>...thought process...</think> judgment:True or False}.\\
    \textbf{Return \texttt{True} if:} The proposed \texttt{Action} is logical, relevant, and productive based on a correct interpretation of the \texttt{Screenshot}.\\
    \textbf{Return \texttt{False} if:} The action is incorrect (illogical, misinterpretation of UI, redundant, or counter-productive).
    
    \noindent\rule{\textwidth}{0.4pt} 
    
    \begin{minipage}[t]{0.70\textwidth} 
        \textbf{Example 1 (Correct Action):}\\
        \textbf{Instruction:} Record an audio clip and save it with name ``F3tb\_presentation.m4a'' using the Audio Recorder app.\\
        \textbf{Screenshot:} [Home screen with ``Audio Recorder'' icon visible.]\\
        \textbf{Agent's Thought and Action:}
        Thought: I need to open the app. I see the  ``Audio Recorder'' icon. I will tap it.
        Action: \texttt{click(start\_box=`(635,520)')}\\
        \textbf{Evaluation:}
        <think>
        Agent's Logic Analysis: The agent correctly identifies the first step and locates the icon. 
        Action Validation: The action is the most direct and logical step.
        </think>
        judgment:True
    \end{minipage}%
    \hfill 
    \begin{minipage}[t]{0.28\textwidth} 
        \vspace{0pt} 
        \centering
        \includegraphics[width=0.5\linewidth]{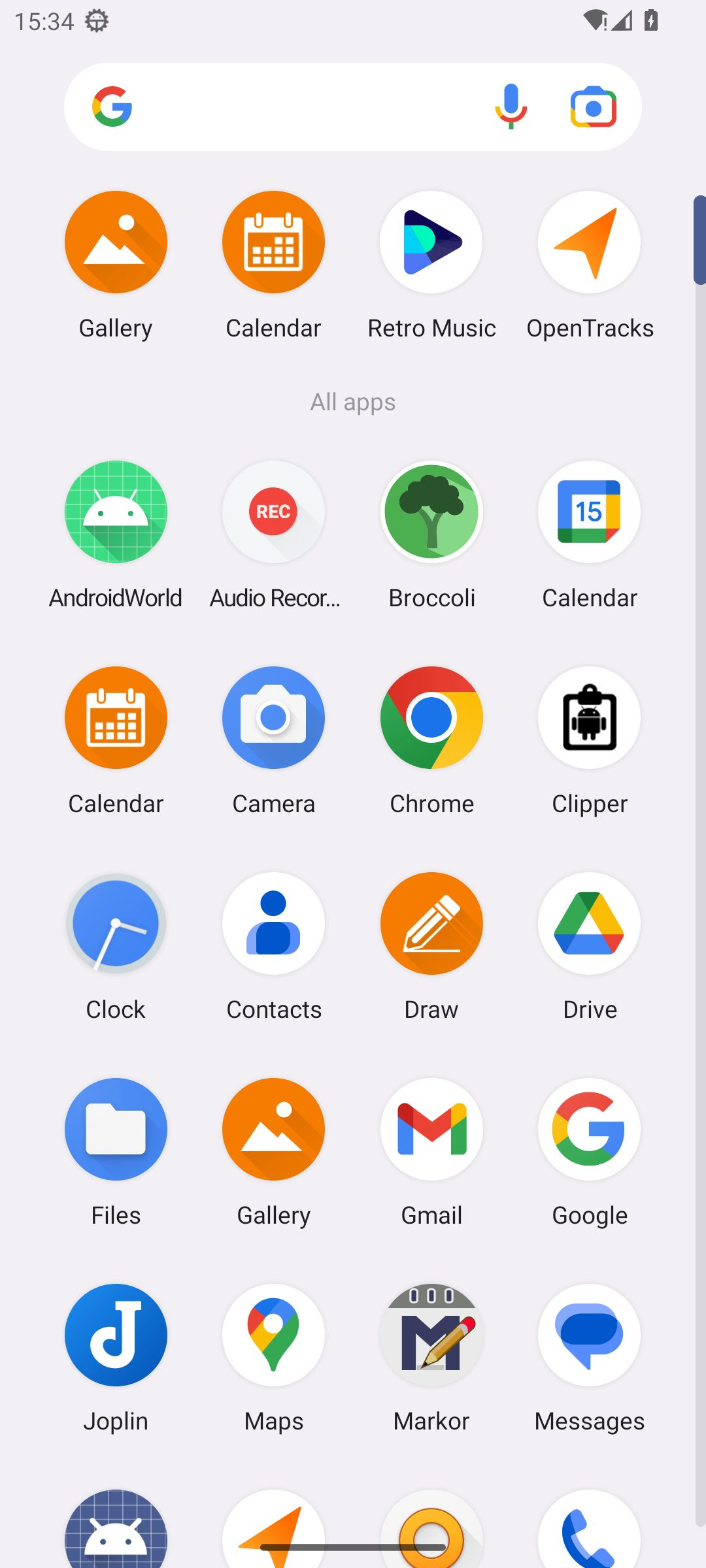}
        \captionof*{figure}{\footnotesize Example 1 Screenshot} 
    \end{minipage}
    
    
    \begin{minipage}[t]{0.70\textwidth} 
        \textbf{Example 2 (Incorrect Action):}\\
        \textbf{Instruction:} Record an audio clip and save it with name ``F3tb\_presentation.m4a'' using the Audio Recorder app.\\
        \textbf{Screenshot:} [Inside Audio Recorder app. ``Get Start'' button visible.]\\
        \textbf{Agent's Thought and Action:}
        Thought: I don't see the app icon, so I must be in the wrong place. I need to go back.
        Action: \texttt{press\_back()}\\
        \textbf{Evaluation:}
        <think>
        Agent's Logic Analysis: The agent used flawed reasoning; it failed to recognize it is already in the app.
        Action Validation: \texttt{press\_back()} exits the app, which is counter-productive. Correct action is clicking get start.
        </think>
        judgment:False
    \end{minipage}%
    \hfill
    \begin{minipage}[t]{0.28\textwidth} 
        \vspace{0pt}
        \centering
        \includegraphics[width=0.5\linewidth]{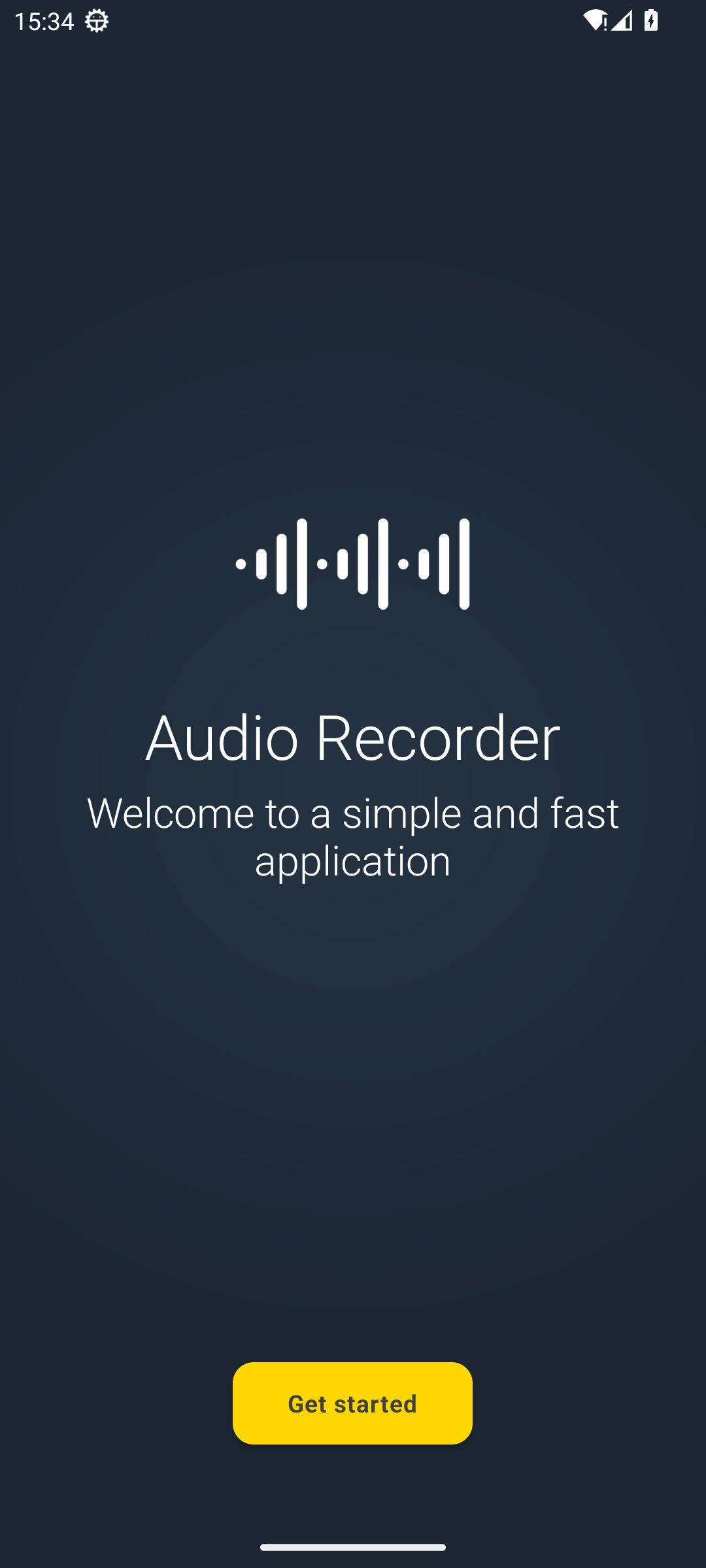} 
        \captionof*{figure}{\footnotesize Example 2 Screenshot}
    \end{minipage}
    
    \noindent\rule{\textwidth}{0.4pt} 
    
    \textbf{Now, evaluate the following scenario:}
    
    \textbf{Instruction:}
    \{instruction\}
    
    \textbf{Agent's Thought and Action:}
    \{agent\_thought\_and\_action\}
    
    \textbf{Evaluation:}
    \end{tcolorbox}
    \caption{Prompt for Process Reward Model.} 
    \label{fig:prompt_prm} 
\end{figure}
\begin{figure}[htbp] 
    \centering 
    \begin{tcolorbox}[
        title=Prompt for GPT-4o to Generate Reason for PRM Training,
        colback=blue!5!white,
        colframe=blue!75!black,
        fonttitle=\bfseries,
        arc=5pt,
        boxrule=1pt,
        colbacktitle=blue!50!white,
    ]
    \textbf{Task Overview:}\\
    You are an expert evaluator for an Android GUI Agent. Your mission is to analyze the agent's behavior at each step and determine if its intended action is correct and logical for accomplishing a given task.
    
    \textbf{1. Context}\\
    The agent interacts with an android operating system. It receives a high-level instruction from a user and attempts to complete it by performing a sequence of actions on the device's GUI. The process is sequential: observe screen $\rightarrow$ generate thought/action $\rightarrow$ execute action $\rightarrow$ repeat.
    The agent's interaction is restricted to predefined actions: \\
    click(start\_box=`[x1, y1, x2, y2]'), long\_press(start\_box=`<|box\_start|>(x1,y1)<|box\_end|>'), scroll(start\_box=`<|box\_start|>(x1,y1)<|box\_end|>', end\_box=`<|box\_start|>(x2,y2)<|box\_end|>'), drag(start\_box=`<|box\_start|>(x1,y1)<|box\_end|>', type(content=`xxx'),  open\_app(app\_name=`'), press\_back(), press\_home(), finished(content=`xxx')
    
    \textbf{2. Your Task}\\
    You will be provided with four pieces of information for a single step: instruction, screenshot, historical actions, ground truth action, agent's thought and action and ground truth judgment. Your task is to critically evaluate the agent's step using an analytical process: 1) Understand the instruction. 2) Analyze the History and Screenshot. 3) Evaluate the agent's action given the ground truth judgment. Output \textbf{1 (Correct/Reasonable Action)} if: Direct progress, logical sub-task, handling obstacles, sensible exploration, appropriate action type, error correction. Output \textbf{0 (Incorrect/Unreasonable Action)} if: Contradicts instruction, illogical action, flawed reasoning, no progress, inefficient action, wrong action type, perpetuating errors.
    
    \textbf{3. Output Format}\\
    Your response MUST be a valid JSON object with two keys:\\
    1. "result": Integer (1 for Correct, 0 for Incorrect)\\
    2. "reason": Concise string explaining your judgment
    
    \textbf{4. Examples}\vspace{-0.5em}
    \begin{tcolorbox}[
        colback=white, 
        colframe=white, 
        fonttitle=\bfseries\footnotesize,
        fontupper=\scriptsize, 
        boxrule=0.5pt,
        left=2pt, right=2pt, top=2pt, bottom=2pt,
        arc=2pt
    ]
    \textbf{Example 1:} Correctly Handling an Obstacle (Correct)\\
    \textbf{Instruction:} Open a new private tab in the browser.\\
    \textbf{Agent's Thought and Action:} Thought: A system dialog has appeared asking to set the default browser. This is not required for my current task. I will dismiss this dialog by clicking `Cancel' to continue. Action: click(start\_box=`(700, 1300)')\\
    \textbf{Expected Output:} \{``result": 1, ``reason": ``The agent correctly identified a system dialog as an obstacle not relevant to the main task. Its decision to dismiss the dialog is a logical and efficient way to handle the interruption and proceed with the instruction."\}
    \end{tcolorbox}
    \begin{tcolorbox}[
        colback=white, 
        colframe=white, 
        fonttitle=\bfseries\footnotesize,
        fontupper=\scriptsize, 
        boxrule=0.5pt,
        left=2pt, right=2pt, top=2pt, bottom=2pt,
        arc=2pt
    ]
    \textbf{Example 2:} Error Based on Historical Context (Incorrect)\\
    \textbf{Instruction:} Delete the contact ``John Smith".\\
    \textbf{Agent's Thought and Action:} Thought: A confirmation pop-up has appeared to delete the contact. I need to confirm the deletion to complete the task. Action: click(start\_box=`(600, 1200)')\\
    \textbf{Expected Output:} \{``result": 0, ``reason": ``The action perpetuates a previous error. The agent is about to delete `John Appleseed', but the instruction was to delete `John Smith'. This stems from an incorrect selection in a previous step, and proceeding would fail the task."\}
    \end{tcolorbox}
    \textbf{Scoring Rule:}\\
    You need to judge if the agent's current action is correct and logical based on the instruction, screenshot, historical context, and the agent's reasoning.\\
    \textbf{Output Format:}\\
    <analysis> [Your analysis] </analysis>\\ 
    </ans> [Your judgment] </ans>\\
    
    \textbf{Your Turn:}\\
    Instruction:\{instruction\}. Historical Actions:\{history\}. Ground truth action:\{ground\_truth\}. Agent’s Thought and Action:\{agent\_thought\_and\_action\}. Ground Truth judgment:\{judgment\}.\\
    \textbf{Your Output:}\\
    \end{tcolorbox}
    \caption{Prompt for GPT-4o to generate reason for PRM Training.} 
    \label{fig:prompt_prm_cot} 
\end{figure}
\newpage
\clearpage
\begin{onecolumn}
\section{Pseudocode}
\label{sec:algorithm}
\begin{algorithm}
\caption{Android Coach Framework}
\label{alg:android_coach}
\begin{algorithmic}[1]
    \State \textbf{Initialize:} Initial actor parameters $\theta$, initial critic parameters $\phi$.
    \State \textbf{Given:} Instruction Pool $\mathcal{I}$, process reward model(PRM), outcome reward verifier(OV).

    \For{each iteration}
        \State \textcolor{green!50!black}{\textbf{\# Phase 1: Actor Data Collection}} 
        \For{each Android step $t$}
            \State Given $\mathcal{I}$, execute action $a_t \sim \pi_\theta(\cdot | s_t)$ and store finished trajectory $\tau$.\Comment\textcolor{green!50!black}{Online Interaction}
        \EndFor
        
        \State \textcolor{green!50!black}{\textbf{\# Phase 2: Assign Returns}} 
        \For{each trajectory $\tau$}
            \State $R_{\text{outcome}} \gets \text{OV}(\tau, \text{Instruction})$
            \For{step $t \gets T$ \textbf{downto} $1$ in trajectory $\tau$}
                \State $r_t^p \gets \text{PRM}(a_t, s_t)$ 
                \State $R_t \gets \text{MC estimation}\left( r_p^{t:T}, r_{\text{O}} \right)$ 
                \State Add $(s_t, a_t, R_t)$ to replay buffer $\mathcal{D}$.
            \EndFor
            
        \EndFor

        \State \textcolor{green!50!black}{\textbf{\# Phase 3: Update Critic}} 
        \For{each critic step}
            \State Sample batch of $(s_t, a_t, R_t) \sim \mathcal{D}$.
            \State Update $\phi$ by clipped MSE loss in Equation~\ref{eq:critic_mse}.
        \EndFor

        \State \textcolor{green!50!black}{\textbf{\# Phase 4: Update Actor}} 
        \For{each actor step}
            \State Sample batch of states $\{s\} \sim \mathcal{D}$
            \State Generate $K$ responses: $\{a_1, \dots, a_K\} \sim \pi_\theta(\cdot|s)$ \Comment\textcolor{green!50!black}{Single State Multiple Actions} 
            \State Compute advantages $\hat{A}(s, a_i)$ by ACLOO: 

            \State \hspace{\algorithmicindent} $Q_i \gets Q_\phi(s, a_i)$ 
            \State \hspace{\algorithmicindent} $\hat{A}(s, a_i) \gets Q_i - \frac{1}{k-1}\sum_{j \neq i} Q_j$
            \State Update $\theta$ by PPO loss in Equation~\ref{eq:ppo_clip_aligned}.
        \EndFor
    \EndFor
\end{algorithmic}
\end{algorithm}
\end{onecolumn}
\newpage
\newtheorem{theorem}{Theorem}
\newtheorem{lemma}{Lemma}
\theoremstyle{definition}
\newtheorem{definition}{Definition}
\theoremstyle{remark}

\section{Lemma}
\label{sec:lemma}
Let $s_t$ be the current state. We sample $k$ independent and identically distributed (i.i.d.) actions from our policy $\pi_{\theta}(\cdot | s_t)$:
$$a^{(1)}, a^{(2)}, \dots, a^{(k)} \sim \pi_{\theta}(\cdot | s_t)$$

For each $i$-th sample $a^{(i)}$, we compute its Q-value $Q(s_t, a^{(i)})$ and define a leave-one-out baseline $b_i$:
$$b_i = \frac{1}{k-1} \sum_{j \neq i} Q(s_t, a^{(j)})$$

The advantage for the $i$-th sample is:
$$\hat{A}^{(i)} = Q(s_t, a^{(i)}) - b_i$$

While the PPO $L^{CLIP}$ objective is inherently biased for stabilization, we justify our choice of the ACLOO estimator by proving that it is statistically sound. Specifically, it is a low-variance estimator that does not introduce any bias when applied to the standard policy gradient theorem. This demonstrates its validity as a high-quality advantage signal. The policy gradient estimator for this sample is:
$$g_i = \hat{A}^{(i)} \nabla_{\theta} \log \pi_{\theta}(a^{(i)} | s_t)$$
 We will now prove that this estimator $g_i$ is \textbf{unbiased} and has \textbf{reduced variance}.

\subsection{Proof of Unbiasedness}

\begin{lemma}[Unbiased Estimator]
The policy gradient estimator $g_i$ is an unbiased estimator of the true policy gradient $\nabla_{\theta} J(\theta)$.
\end{lemma}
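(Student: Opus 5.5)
We will show that, conditional on $s_t$, $\mathbb{E}[g_i \mid s_t] = \mathbb{E}_{a\sim\pi_\theta(\cdot|s_t)}\big[Q(s_t,a)\nabla_\theta\log\pi_\theta(a|s_t)\big]$. By the policy gradient theorem, this is the state-$s_t$ contribution to $\nabla_\theta J(\theta)$. Averaging over the on-policy state distribution then gives unbiasedness of $g_i$ for $\nabla_\theta J(\theta)$. Throughout we treat $Q$ as a fixed function of $(s_t,a)$ that does not depend on the sampled actions; this is the idealization implicit in the statement, where $Q_\phi$ is held fixed during the actor step or equals $Q^{\pi_\theta}$.

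First, split $g_i$ using the definition of $\hat{A}^{(i)}$:
\[
g_i = Q(s_t,a^{(i)})\nabla_\theta\log\pi_\theta(a^{(i)}|s_t) \;-\; b_i\,\nabla_\theta\log\pi_\theta(a^{(i)}|s_t).
\]
The first term has conditional expectation $\mathbb{E}_{a\sim\pi_\theta}[Q(s_t,a)\nabla_\theta\log\pi_\theta(a|s_t)]$, because $a^{(i)}\sim\pi_\theta(\cdot|s_t)$. This is exactly the policy gradient term at $s_t$.

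The core step is showing that the baseline term has zero mean. The baseline $b_i$ is a function only of $\{a^{(j)}\}_{j\neq i}$, and by the i.i.d.\ sampling assumption these actions are independent of $a^{(i)}$ given $s_t$. The expectation therefore factorizes:
\[
\mathbb{E}\big[b_i\nabla_\theta\log\pi_\theta(a^{(i)}|s_t)\mid s_t\big]=\mathbb{E}[b_i\mid s_t]\;\mathbb{E}_{a\sim\pi_\theta}\big[\nabla_\theta\log\pi_\theta(a|s_t)\big].
\]
We then apply the score-function identity:
\[
\mathbb{E}_{a\sim\pi_\theta}\big[\nabla_\theta\log\pi_\theta(a|s_t)\big]=\sum_a\nabla_\theta\pi_\theta(a|s_t)=\nabla_\theta\sum_a\pi_\theta(a|s_t)=\nabla_\theta 1=0.
\]
For token sequences the sum runs over the countable response space, or is replaced by an integral. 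The interchange of gradient and sum needs only mild regularity, such as finite maximum response length or dominated convergence. We also need $\mathbb{E}[|b_i|]<\infty$, which holds because $Q$ is bounded (the returns $R_t$ are bounded).

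The main obstacle is justifying the factorization, since it rests entirely on $b_i$ not depending on $a^{(i)}$. This is exactly why the baseline must be leave-one-out: the naive group mean that includes $Q(s_t,a^{(i)})$ would introduce a correlation term of order $1/k$. We will state this point explicitly. We will also note that $Q$ being fixed with respect to the sampled actions is what allows the argument to go through.

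Finally, by symmetry every $g_i$ has the same expectation. Hence the averaged estimator $\frac1k\sum_i g_i$ is also unbiased.
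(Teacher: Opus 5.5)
Your proposal is correct and follows the paper's proof essentially step for step. You split $g_i$ by linearity, identify the first term with the policy gradient at $s_t$, factor the baseline term using the independence of $b_i$ (a function of $\{a^{(j)}\}_{j\neq i}$ only) from $a^{(i)}$, and eliminate it with the score-function identity $\E_{a\sim\pi_\theta}[\nabla_\theta\log\pi_\theta(a|s_t)]=0$. Your extra remarks on integrability, on interchanging $\nabla_\theta$ with the sum, and on why the baseline must exclude $Q(s_t,a^{(i)})$ are refinements the paper omits. One small caveat: the factorization needs only that $Q$ is fixed, but identifying the first term with $\nabla_\theta J(\theta)$ also needs $Q=Q^{\pi_\theta}$ (up to an action-independent shift), so the ``or'' in your setup should really be an ``and''.
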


\begin{proof}
We prove that the expected value of our estimator $g_i$ is equal to the true policy gradient $\nabla_{\theta} J(\theta)$.

The true policy gradient is defined as:
$$\nabla_{\theta} J(\theta) = \E_{a \sim \pi_{\theta}} [ Q(s_t, a) \nabla_{\theta} \log \pi_{\theta}(a | s_t) ]$$

The expectation of our estimator $g_i$ is taken over all $k$ i.i.d. samples:
$$\E[g_i] = \E \left[ \left( Q(s_t, a^{(i)}) - b_i \right) \cdot \nabla_{\theta} \log \pi_{\theta}(a^{(i)} | s_t) \right]$$

By linearity of expectation, we split this into two terms:
$$\E[g_i] = \E[Q(s_t, a^{(i)}) \nabla_{\theta} \log \pi_{\theta}(a^{(i)} | s_t)] - \E[b_i \cdot \nabla_{\theta} \log \pi_{\theta}(a^{(i)} | s_t)]$$

\textbf{1. Analyzing the first term:}
Since $a^{(i)}$ is a sample drawn from $\pi_{\theta}(\cdot | s_t)$, the first term is, by definition, the true policy gradient:
$$\E[Q(s_t, a^{(i)}) \nabla_{\theta} \log \pi_{\theta}(a^{(i)} | s_t)] = \nabla_{\theta} J(\theta)$$

\textbf{2. Analyzing the second term (the bias term $B$):}
$$B = \E[b_i \cdot \nabla_{\theta} \log \pi_{\theta}(a^{(i)} | s_t)]$$

The key insight is that our $k$ samples are i.i.d.
\begin{itemize}
    \item The baseline $b_i = \frac{1}{k-1} \sum_{j \neq i} Q(s_t, a^{(j)})$ is a random variable that depends only on the samples $\{a^{(j)}\}_{j \neq i}$.
    \item The gradient term $\nabla_{\theta} \log \pi_{\theta}(a^{(i)} | s_t)$ is a random variable that depends only on the sample $a^{(i)}$.
\end{itemize}
Because $a^{(i)}$ is \textbf{statistically independent} of $\{a^{(j)}\}_{j \neq i}$, the random variables $b_i$ and $\nabla_{\theta} \log \pi_{\theta}(a^{(i)} | s_t)$ are also \textbf{statistically independent}.

For independent random variables $X$ and $Y$, $\E[X Y] = \E[X] \E[Y]$. Therefore:
$$B = \E[b_i] \cdot \E[\nabla_{\theta} \log \pi_{\theta}(a^{(i)} | s_t)]$$

We now compute the expectation of the gradient term:
\begin{align*}
\E[\nabla_{\theta} \log \pi_{\theta}(a^{(i)} | s_t)] &= \sum_a \pi_{\theta}(a | s_t) \cdot \nabla_{\theta} \log \pi_{\theta}(a | s_t) \\
&= \sum_a \pi_{\theta}(a | s_t) \cdot \frac{\nabla_{\theta} \pi_{\theta}(a | s_t)}{\pi_{\theta}(a | s_t)} \\
&= \sum_a \nabla_{\theta} \pi_{\theta}(a | s_t) \\
&= \nabla_{\theta} \left( \sum_a \pi_{\theta}(a | s_t) \right) \\
&= \nabla_{\theta} (1) = 0
\end{align*}
Substituting this result back into the bias term $B$:
$$B = \E[b_i] \cdot 0 = 0$$

\textbf{3. Conclusion:}
The bias term is zero. Thus, the expectation of our estimator is the true policy gradient:
$$\E[g_i] = \nabla_{\theta} J(\theta) - 0 = \nabla_{\theta} J(\theta)$$
This proves the estimator is \textbf{unbiased}.
\end{proof}

\subsection{Proof of Variance Reduction (via Shift-Invariance)}

\begin{lemma}[Variance Reduction]
The advantage estimator $\hat{A}^{(i)}$ is invariant to an arbitrary constant shift $C$ added to the Q-function, which centers the advantage estimates and reduces variance.
\end{lemma}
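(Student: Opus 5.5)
The plan is to split the statement into its two claims, shift-invariance and the variance consequence, and prove them in that order.

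\textbf{Shift-invariance.} This part is pure algebra. Replace $Q$ by $Q' = Q + C$ for an arbitrary constant $C$, independent of the action. Then the leave-one-out baseline becomes
$$b_i' = \frac{1}{k-1}\sum_{j\neq i}\bigl(Q(s_t,a^{(j)})+C\bigr) = b_i + C,$$
because exactly $k-1$ terms are averaged. Subtracting gives $\hat{A}'^{(i)} = Q(s_t,a^{(i)})+C-b_i-C = \hat{A}^{(i)}$. So the advantage, and hence $g_i$, is literally unchanged under any shift of the critic. The same identity holds for a state-dependent shift $C(s_t)$, and I would remark on this, since it covers a learned value offset.

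\textbf{Centering.} Next I would show that the advantage is centered. By the i.i.d.\ assumption, $\E[b_i]=\E_{a\sim\pi_\theta}[Q(s_t,a)]=V^{\pi}(s_t)$, so $\E[\hat{A}^{(i)}]=0$. I would also note the exact identity $\hat{A}^{(i)} = \frac{k}{k-1}\bigl(Q_i - \bar{Q}\bigr)$, where $\bar{Q}$ is the full sample mean. The advantages therefore always sum to zero within the group, which is the concrete sense of centering.

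\textbf{Variance reduction.} This is the step that needs care. I would compare against the baseline-free estimator $g_i^0 = Q_i\nabla_\theta\log\pi_\theta(a^{(i)}|s_t)$. Write $\psi_i=\nabla_\theta\log\pi_\theta(a^{(i)}|s_t)$ and $Q_i = V + \delta_i$ with $\E[\delta_i]=0$. Using independence of $b_i$ and $a^{(i)}$ together with the score-function identity from the preceding Lemma, the cross term vanishes, and I expect
$$\E\|g_i\|^2 = \E[\delta_i^2\|\psi_i\|^2] + \tfrac{\sigma^2}{k-1}\,\E\|\psi_i\|^2,$$
while
$$\E\|g_i^0\|^2 = \E[\delta_i^2\|\psi_i\|^2] + 2V\,\E[\delta_i\|\psi_i\|^2] + V^2\,\E\|\psi_i\|^2,$$
where $\sigma^2=\mathrm{Var}(Q_i)$.

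\textbf{Main obstacle.} The reduction is not unconditional. It holds when the offset is large relative to the spread, roughly when $V^2$ dominates $\sigma^2/(k-1)$ plus the correlation term. This is exactly where shift-invariance enters. Since $g_i^0$ depends on an arbitrary shift $C$ (its second moment grows like $C^2$) while $g_i$ does not, the ACLOO estimator is insensitive to the critic's offset. Moreover, its excess variance over the ideal baseline $V^{\pi}(s_t)$ is only $\frac{\sigma^2}{k-1}\E\|\psi_i\|^2$, which vanishes as $k$ grows. I would state the variance claim in this precise form rather than as a universal inequality.
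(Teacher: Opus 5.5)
Your shift-invariance step is the same computation as the paper's ($b_i' = b_i + C$ because $k-1$ copies of $C$ are averaged). Beyond that step, your route differs from the paper's and is more rigorous. The paper stops at invariance and simply asserts that this ``centering property dramatically reduces the variance'' of $g_i$. It never computes a second moment, and it leaves ``centering'' informal.

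You make centering precise: $\mathbb{E}[\hat{A}^{(i)}]=0$, and $\hat{A}^{(i)}=\frac{k}{k-1}(Q_i-\bar{Q})$, so the advantages in a group sum to zero. You then compute $\mathbb{E}\|g_i\|^2$ and $\mathbb{E}\|g_i^0\|^2$ via $Q_i=V+\delta_i$ and $b_i=V+\bar\delta_{-i}$. Both formulas are correct. Since both estimators have mean $\nabla_\theta J(\theta)$, the difference of second moments equals the difference of total variances.

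What your route buys is the honest form of the claim. The excess over using the exact baseline $V$ is $\frac{\sigma^2}{k-1}\mathbb{E}\|\psi_i\|^2$. Relative to no baseline, the reduction holds only when $V^2\mathbb{E}\|\psi_i\|^2+2V\mathbb{E}[\delta_i\|\psi_i\|^2]$ exceeds that term. For instance, with $V=0$ (and $\sigma^2>0$), ACLOO strictly \emph{increases} the second moment. So the paper's inference ``shift-invariant, hence lower variance'' does not hold in general. Invariance only gives insensitivity to the critic's offset, which is exactly the conclusion you draw. The paper's version offers brevity and the qualitative intuition, but no proof of the variance claim.

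One small correction to your write-up: the cross term $-2\,\mathbb{E}[\delta_i\bar\delta_{-i}\|\psi_i\|^2]$ vanishes because $\bar\delta_{-i}$ is independent of $a^{(i)}$ and has mean zero. It does not vanish because of the score-function identity. That identity's role is to make both estimators unbiased, which is what lets you read second moments as variances.
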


\begin{proof}
We prove that $\hat{A}^{(i)}$ is shift-invariant. Let $Q'(s, a) = Q(s, a) + C$ be the shifted Q-function for any constant $C \in \R$.

The new advantage $\hat{A}'^{(i)}$ is:
$$\hat{A}'^{(i)} = Q'(s_t, a^{(i)}) - b'_i$$

First, we compute the new baseline $b'_i$ using the shifted $Q'$-values:
\begin{align*}
b'_i &= \frac{1}{k-1} \sum_{j \neq i} Q'(s_t, a^{(j)}) \\
&= \frac{1}{k-1} \sum_{j \neq i} \left( Q(s_t, a^{(j)}) + C \right) \\
&= \left( \frac{1}{k-1} \sum_{j \neq i} Q(s_t, a^{(j)}) \right) + \left( \frac{1}{k-1} \sum_{j \neq i} C \right) \\
&= b_i + \frac{1}{k-1} \left( (k-1) \cdot C \right) \\
&= b_i + C
\end{align*}
Now, substitute $Q'$ and $b'_i$ back into the expression for $\hat{A}'^{(i)}$:
\begin{align*}
\hat{A}'^{(i)} &= Q'(s_t, a^{(i)}) - b'_i \\
&= \left( Q(s_t, a^{(i)}) + C \right) - \left( b_i + C \right) \\
&= Q(s_t, a^{(i)}) - b_i \\
&= \hat{A}^{(i)}
\end{align*}
\textbf{Conclusion:}
Since $\hat{A}'^{(i)} = \hat{A}^{(i)}$, the advantage estimator is invariant to any constant shift $C$. This demonstrates that $\hat{A}^{(i)}$ measures the relative quality of $a^{(i)}$ compared to the average of its peers, effectively centering the advantage values. This centering property dramatically \textbf{reduces the variance} of the gradient estimator $g_i$.
\end{proof}

\end{document}